\documentclass{article}

\usepackage[preprint]{neurips_2026}

\usepackage[utf8]{inputenc}
\usepackage[T1]{fontenc}
\usepackage{hyperref}
\usepackage{url}
\usepackage{booktabs}
\usepackage{amsfonts}
\usepackage{nicefrac}
\usepackage[expansion=true,protrusion=true,final]{microtype}
\usepackage{xcolor}
\usepackage{enumitem}
\usepackage[skip=2pt,font=small]{caption}
\usepackage[skip=2pt,font=footnotesize]{subcaption}
\usepackage{titlesec}
\usepackage{dsfont}

\titlespacing*{\section}{0pt}{8pt plus 2pt minus 2pt}{4pt plus 1pt minus 1pt}
\titlespacing*{\subsection}{0pt}{6pt plus 1pt minus 1pt}{3pt plus 1pt minus 1pt}
\titlespacing*{\paragraph}{0pt}{4pt plus 1pt minus 1pt}{0.6em}

\setlist{itemsep=2pt, topsep=2pt, parsep=0pt, partopsep=0pt}

\renewcommand{\arraystretch}{0.95}

\usepackage{amsmath}
\usepackage{amssymb}
\usepackage{mathtools}
\usepackage{amsthm}

\usepackage{graphicx}
\usepackage{multirow}
\usepackage{listings}
\usepackage[most]{tcolorbox}
\usepackage[export]{adjustbox}
\usepackage[capitalize,noabbrev]{cleveref}
\tcbuselibrary{skins, breakable}
\usepackage{fancyvrb}
\usepackage{float} 

\theoremstyle{plain}
\newtheorem{theorem}{Theorem}[section]

\newtheorem{corollary}[theorem]{Corollary}
\theoremstyle{definition}
\newtheorem{definition}{Definition}

\theoremstyle{remark}

\newtcolorbox{promptbox}[1]{
    colback=white,
    colframe=black,
    title={#1},
    fonttitle=\bfseries,
    breakable,
    skin=enhanced,
}

\newenvironment{minted}[2][]{\VerbatimEnvironment\begin{Verbatim}[fontsize=\footnotesize,frame=single]}{\end{Verbatim}}
\providecommand{\usemintedstyle}[1]{}
\usepackage[disable,textsize=tiny]{todonotes}

\providecommand{\printAffiliationsAndNotice}[1]{}

\title{Internalizing Geometric Law:\\ Learning from Solver Residuals for Precision-Critical Generation}

\author{%
  Rafael Cabral, Pang Zixi, Ziyi Shou, Shen Xin\thanks{Corresponding author.}  \\
  Huawei Celia Team \\
  \texttt{shenxin19@huawei.com}
}

\begin{document}

\maketitle

\begin{abstract}

Large Language Models frequently hallucinate in precision-critical domains such as technical diagramming and mechanical design, where outputs must satisfy strict geometric constraints. We study open-ended geometric synthesis from natural language: translating free-form descriptions into precise constructions whose entities must simultaneously satisfy dozens of interacting constraints. To make this tractable, we release \textit{PyGeoX}, a programmable geometric DSL that compiles declarative constraints into a differentiable loss, and \textit{PyGeoX-Bench}, a stratified suite of 300 problems with per-constraint verifiable rewards. Using PyGeoX as a verifier, we identify a failure mode we call \textit{Outlier Gradient Masking}: under global-norm rewards (any scheme that aggregates residuals through a single norm, for example, $\exp(-\text{MSE})$), a single outlier constraint can nullify the learning signal across all others. To address this, we propose \textit{Saturating Additive Rewards} (SAR), which decompose the reward into bounded per-constraint terms, preserving partial progress and ensuring consistent gradients even under severe violations. Against MSE-based rewards, the natural baseline for geometry solvers, SAR improves the hard-tier solving rate by $2.3\times$, and the resulting 8B model is competitive with much larger frontier systems on this benchmark. We release the engine, benchmark, and data at \url{https://github.com/Huawei-AI4Math/PyGeoX}.

\end{abstract}


\section{Introduction}
\label{sec:introduction}

Large Language Models have achieved remarkable proficiency in semantic tasks, from code synthesis to literature summarization. However, their capability to adhere to rigorous, precision-critical constraints remains brittle. This phenomenon, which we term ``precision hallucination,'' diverges fundamentally from the ``semantic hallucinations'' of earlier models: outputs are syntactically coherent and semantically plausible but violate the exact laws of geometry, physics, or logical consistency \citep{trinh2024alphageometry}.

For researchers aiming to deploy generative models in engineering domains, specifically technical diagramming, computer-aided design (CAD) and kinematic mechanism design, this limitation is existential. A generated technical diagram might look plausible, but if it connects two components with a physically impossible linkage, it constitutes more than a simple hallucination. Instead, it represents a critical functional failure. We consider all such problems under the umbrella of \textit{Geometric Constraint Solving} (GCS), as they involve finding a configuration of geometric entities that satisfies a set of relational constraints.


\begin{figure}[htp]
\centering
\includegraphics[width=\linewidth]{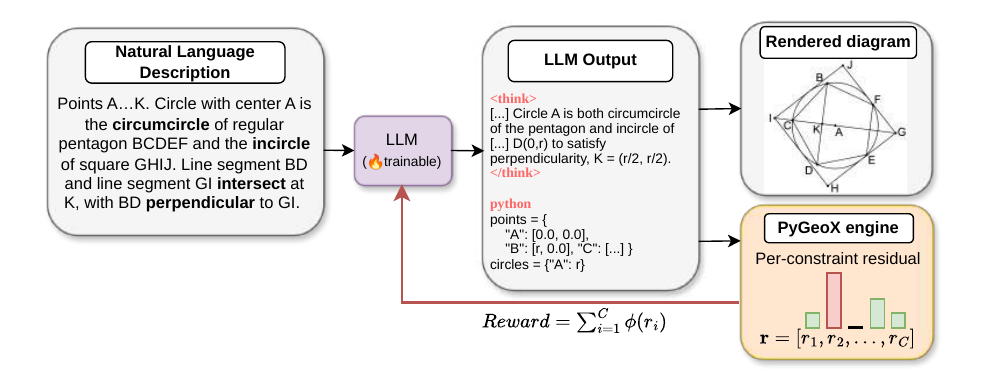}
\caption{Task overview. The agent thinks and emits Python code containing exact coordinates and radii.}
\label{fig:workflow}
\end{figure}
\vspace{-0.25em}

Current approaches to mitigate this limitation often rely on delegating reasoning to an external numerical or symbolic solver. While this ensures strict constraint satisfaction, it restricts the LLM to a translation-only role. The model merely converts natural language into solver-specific syntax, which hinders its ability to internalize spatial logic or perform creative, constrained synthesis. Existing symbolic solvers and geometry constraint engines, such as FormalGeo \citep{zhang2023formalgeo} and the static predicate systems used in AlphaGeometry \citep{trinh2024alphageometry} or FGeo-DRL \citep{fgeodrl}, impose inherent limitations on the expressivity of objects and relationships. For instance, some of these systems cannot deal with inequality relationships (e.g., ``line 1 is larger than line 2''). Crucially, this creates a bottleneck across both pure geometric solving and engineering domains like kinematic synthesis. Current symbolic engines lack the vocabulary to define dynamic functional requirements, which limits their utility to textbook geometry problems rather than real-world engineering design. By delegating all spatial logic to a rigid API, the model never learns the underlying geometric laws, rendering it brittle when faced with novel constraints that lie outside the tool's pre-defined vocabulary.

\paragraph{Our approach.} We formulate end-to-end GCS as a new LLM alignment task and build the infrastructure to train on it: \textit{PyGeoX}, a geometry library with auto-compiled differentiable rewards (Figure~\ref{fig:PyGeoX}), a procedural data pipeline yielding $\sim 100k$ validated training problems, and the benchmarks needed to measure progress. With this infrastructure in place, the remaining question is reward design. Standard Reinforcement Learning with Verifiable Rewards (RLVR) aggregators, such as global norms (e.g., $\exp(-\|\mathbf{r}\|_2^2)$, where $\mathbf{r} = [r_1, \ldots, r_C]$ is the per-constraint residual vector returned by the solver) and binary success indicators, suffer a failure mode we call \textit{Outlier Gradient Masking}: a single severely violated constraint drives the aggregated reward to zero, zeroing out the policy gradient and destroying the learning signal from every satisfied constraint. We propose \textit{Saturating Additive Rewards} (SAR), which sum independent bounded kernels per constraint so that progress on satisfied constraints survives. Figure~\ref{fig:workflow} summarizes the overall framework.

\subsection{Preliminaries}
\label{sec:preliminaries}

\paragraph{Task.} Given a natural-language description $\mathrm{x}$ specifying geometric objects (points, lines, circles, polygons) and relational constraints (incidence, perpendicularity, tangency, length, area, \ldots), the model must reason about the geometry and produce a Python program whose output is a structured dictionary containing exact metric values for every free parameter: point coordinates and circle radii, such that all constraints are simultaneously satisfied. The model operates as a single-turn code agent with access to standard scientific libraries (\texttt{numpy}, \texttt{scipy}, \texttt{sympy}) and may choose any solving strategy: constructive ruler-and-compass derivations, numerical optimization, or hybrids.

\paragraph{Verification.} The PyGeoX symbolic-numeric solver (Section~\ref{sec:pygeox}) compiles the constraints into a residual vector $\mathbf{r} = [r_1, \ldots, r_C]$, where each $r_i \geq 0$ measures the violation magnitude of one constraint (e.g., Euclidean distance for incidence, angular deviation for parallelism). A solution is judged correct when $\|\mathbf{r}\|_2^2 < 10^{-3}$ (allowing for floating-point error propagation through the constraint equations). Crucially, the LLM cannot use the PyGeoX library. PyGeoX is used only off-policy: for procedural training-data generation (Section~\ref{sec:data_gen}) and for computing per-constraint residuals during RL reward evaluation. This separation is deliberate: confining the agent to a fixed domain-specific language (DSL) would defeat the goal of internalizing geometric law, since the agent would learn DSL syntax rather than spatial reasoning.


\paragraph{Why this is a new task.} No existing public work targets this exact input/output signature. Geometry problem-solving models and benchmarks \citep{geoqa, unigeo, pgps9k, zhang2023formalgeo, geoeval, geosense} output scalar answers to textbook geometry problems or proofs; diagram-generation benchmarks \citep{wei2025ggbench} evaluate rendered images; theorem-proving systems  \citep{trinh2024alphageometry, fgeodrl} output discrete proof steps. None require translating natural language geometric descriptions to numerical coordinates directly using LLMs, and therefore, we release a new benchmark for this task, \textit{PyGeoX-Bench}. 

\paragraph{Contributions.} In this work, we introduce \textbf{PyGeoX-RL}, a neuro-symbolic framework that teaches LLMs to \textit{internalize geometric law}. Our contributions are:

\begin{enumerate}
\item \textbf{End-to-end GCS as an LLM alignment task.} We formulate Geometric Constraint Solving as a new alignment problem: natural language to exact metric coordinates verified by per-constraint residuals, a task formulation that no existing benchmark targets and that even frontier LLMs cannot reliably solve out of the box.

\item \textbf{The PyGeoX engine.} We release a programmable geometric environment covering 35 object types and 38 relationships, with auto-compiled differentiable reward functions. PyGeoX serves as both a scalable data generation engine and an RL Gym environment, and supports expressivity (inequality constraints, polygon relationships) absent from prior solvers.

\item \textbf{Data pipeline and benchmarks.} We provide a procedural pipeline yielding $\sim$100k validated training problems, plus PyGeoX-Bench (300 stratified evaluation problems) and PyGeoX-Wild, with 86 out-of-distribution (OOD) diagnostic problems adapted from a published middle-school geometry benchmark.

\item \textbf{Reward design that makes RL on GCS viable.} We identify a failure mode, Outlier Gradient Masking, common to global-norm and sparse rewards, and introduce Saturating Additive Rewards (SAR). To our knowledge, this is the first application of per-constraint solver residuals as a dense reward for autoregressive LLMs. SAR substantially outperforms field-standard sparse and global-norm rewards on hard GCS problems.
\end{enumerate}

\section{Related Work}
\label{sec:related_work}

\textbf{Neuro-Symbolic Geometric Reasoning.} Integrating geometric engines with LLMs mitigates the precision hallucinations of pure neural models. Current approaches generally partition into three categories with specific limitations for RL training. ``Code-as-Reasoning" frameworks like \textit{GeoCoder} \citep{sharma2025geocoder}, \textit{ToRA} \citep{Gou2023ToRAAT}, and \textit{CAD-Llama} \citep{li2025cadllamaleveraginglarge} effectively  reduce the LLM to a translator from natural language into solver-specific syntax, delegating all geometric reasoning to the external engine. Recent work adds RL to this paradigm \citep{recad,rlcad}, but although these systems benefit from solver feedback during training, the LLM still remains a DSL translator. Discrete deductive systems such as \textit{AlphaGeometry} \citep{trinh2024alphageometry} and \textit{FormalGeo} \citep{zhang2023formalgeo} offer logical rigor but are designed for theorem proving in discrete spaces rather than constructive synthesis. Lastly, traditional CAD tools like \textit{SolveSpace} \citep{solvespace} and \textit{Parasolid} \citep{Sears_1991} are engineered for final precision rather than intermediate learning, lacking granular feedback necessary for agent improvement. 
\textbf{Physics-Informed and Constraint-Based Learning.} Our approach aligns theoretically with Physics-Informed Neural Networks (PINNs) \cite{Raissi2017PhysicsID}, which embed physical laws into deep learning by treating PDE residuals as loss functions. While recent works like \textit{PIRF} \citep{yuan2025pirf} have adapted this paradigm to diffusion models, to the best of our knowledge, we are the first to translate this ``residual-as-supervision'' framework into the domain of autoregressive LLMs. Current state-of-the-art reasoning models, such as \textit{DeepSeek-R1} \citep{Guo_2025} and other RLVR approaches \citep{wang2025geometryzero}, typically discard this fine-grained residual information, relying instead on sparse, binary outcome supervision ($r \in \{0, 1\}$). 



\textbf{Generative Geometric Design.} \citet{wang2025magicgeo} target diagram generation from natural language but delegate coordinate computation to an algorithmic solver; the LLM autoformalizes constraints into a formal specification, and the system is training-free. \citet{casey2025aligning} fine-tune LLMs to predict \emph{constraint labels} given primitives with known coordinates, a classification task on existing geometry rather than coordinate generation. In contrast, our LLM directly emits metric coordinates satisfying every constraint, and the solver is used only for verification and reward computation.

\section{Reward Design for GCS}
\label{sec:theoretical_analysis}

In this section we analyze the central challenge of aggregating a high-dimensional residual vector into a scalar reward $\mathcal{R}$ and motivate the SAR family.

\subsection{From Residuals to Rewards}

A source of confusion in this domain arises from conflating the role of \textit{Reward Functions} in Reinforcement Learning with \textit{Loss Functions} in Supervised Learning. One might intuitively assume that because the sum of squared errors (SSE) works well for regression, a simple transformation (such as negation) will serve as an effective reward.

In supervised regression, minimizing a global norm loss like SSE ($\mathcal{L} = \|\mathbf{r}\|_2^2$) is effective because the gradient is directly proportional to the residual ($\nabla \mathcal{L} \propto \mathbf{r}$). Outliers are explicitly instructive, creating large gradients that tell the model exactly which weights to adjust. In RL, the mechanics of the update are fundamentally different. Let $\pi_\theta$ denote the language model policy with parameters $\theta$, mapping a prompt $\mathrm{x}$ to a distribution over generated sequences $\mathrm{y}$, and let $\mathcal{R}(\mathrm{x}, \mathrm{y})$ be a scalar reward that quantifies how well $\mathrm{y}$ satisfies the geometric constraints specified by $\mathrm{x}$. Given prompts $\mathrm{x} \sim \mathcal{D}$ and generated answers $\mathrm{y} \sim \pi_\theta(\cdot \mid \mathrm{x})$, the policy gradient (omitting KL regularization for clarity) is:
\begin{equation*}
\nabla_\theta J(\theta) = \mathbb{E}_{\mathrm{x} \sim \mathcal{D},\, \mathrm{y} \sim \pi_\theta(\cdot \mid \mathrm{x})} \left[ \mathcal{R}(\mathrm{x}, \mathrm{y}) \cdot \nabla_\theta \log \pi_\theta(\mathrm{y} \mid \mathrm{x}) \right].
\label{eq:policy_gradient}
\end{equation*}
The update direction is determined solely by the log-probability gradient $\nabla_\theta \log \pi_\theta(\mathrm{y} \mid \mathrm{x})$, while the reward $\mathcal{R}(\mathrm{x}, \mathrm{y})$ acts only as a scalar multiplier. This creates a critical vulnerability when the reward depends on the residuals through a global aggregation, such as a sum of squared errors $\|\mathbf{r}\|_2^2$, any $\ell_p$ norm, or a transformation thereof (e.g., $\exp(-\|\mathbf{r}\|_2^2)$ or $\exp(-\text{MSE})$). In all such cases, a single large outlier residual $r_i \gg 0$, common in complex GCS tasks, drives the aggregated reward to near-zero. This ``all-or-nothing'' scaling nullifies the entire policy gradient term, masking the agent's partial progress on the remaining constraints (Figure~\ref{fig:reward_analysis}).

\begin{figure}[htp]
    \centering
    \begin{subfigure}[t]{0.48\columnwidth}
        \centering
        \includegraphics[width=\linewidth]{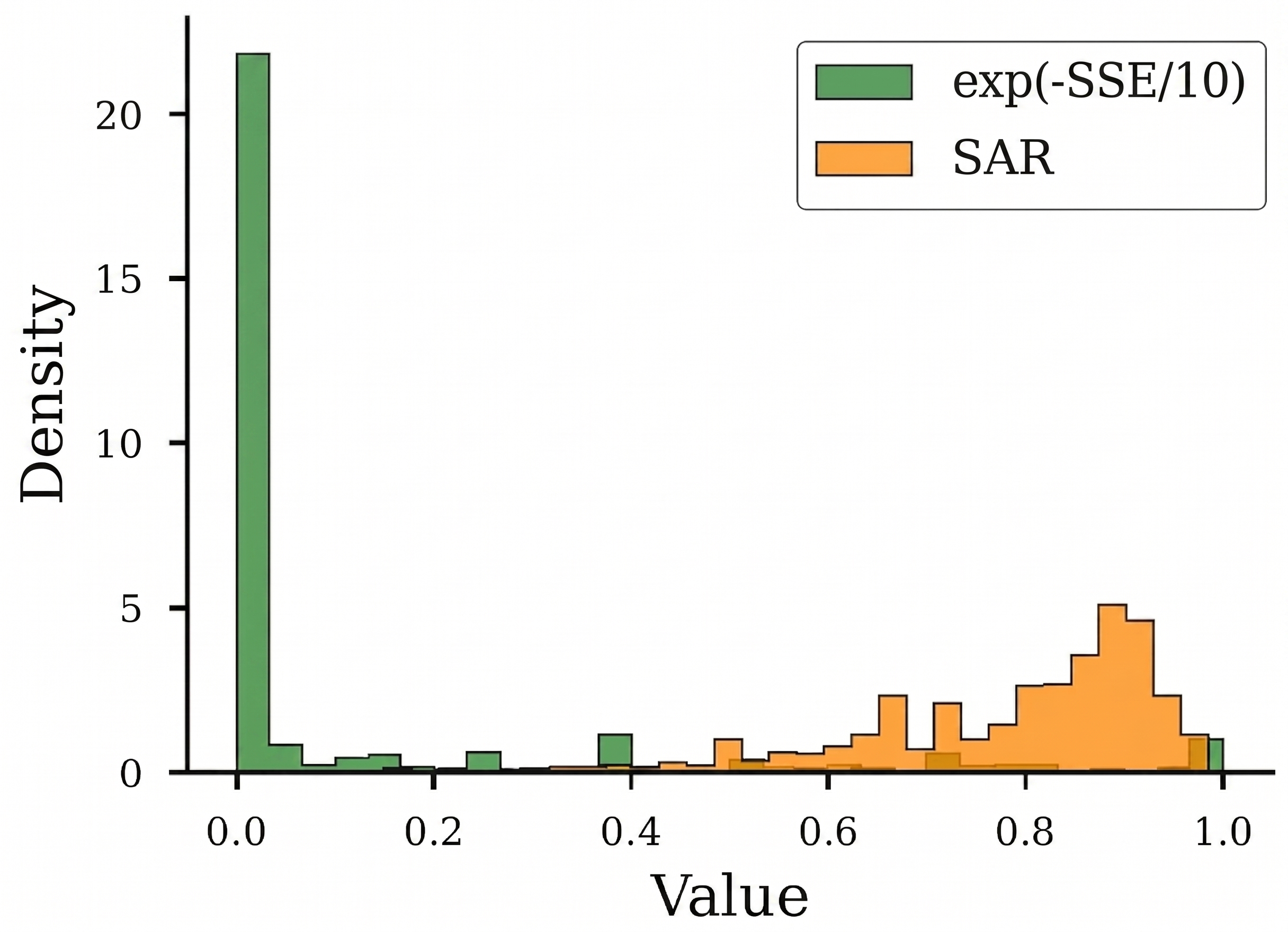}
        \label{fig:partial_progress}
    \end{subfigure}
    \hfill
    \begin{subfigure}[t]{0.46\columnwidth}
        \centering
        \includegraphics[width=\linewidth]{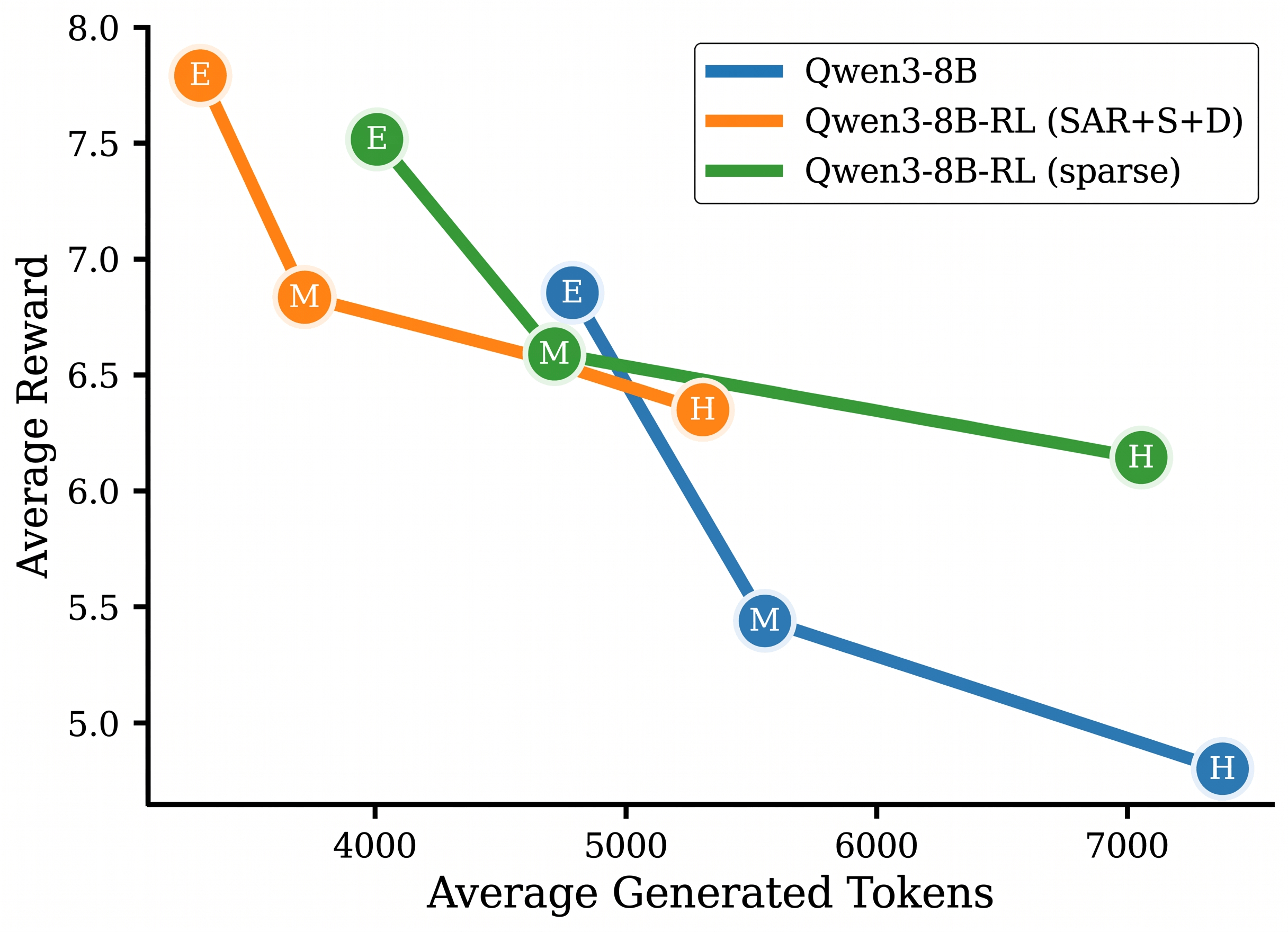}
        \label{fig:reward_vs_tokens}
    \end{subfigure}
    \caption{Reward signal analysis for Qwen3-8B on PyGeoX-Bench. (Left) The global-norm reward collapses to zero on partially correct base-model outputs, providing no learning signal and SAR preserves dense signal even when some constraints remain violated. (Right) SAR-trained models achieve higher reward with fewer tokens across all difficulty tiers (E-Easy, M-Medium, H-Hard) compared to the sparse baseline.}
    \label{fig:reward_analysis}
\end{figure}
\vspace{-0.25em}

\subsection{Saturating Additive Rewards}
\label{subsec:sar}

To resolve outlier gradient masking, we propose summing independent bounded kernels, one per constraint, so that no single residual can collapse the reward signal.

\begin{definition}
\label{eq:sar_def}
A reward function $\mathcal{R}(\mathbf{r})$ is a \emph{Saturating Additive Reward} (SAR) if it decomposes as
\begin{equation}
\mathcal{R}_{\text{SAR}}(\mathbf{r}) = \sum_{i=1}^C \phi(r_i),
\end{equation}
for a monotonically decreasing kernel $\phi : \mathbb{R}_{\geq 0} \to [0, 1]$ with $\lim_{r \to \infty} \phi(r) = 0$.
\end{definition}

We instantiate SAR with the Boltzmann kernel $\phi(r) = e^{-r/T}$ throughout the paper. Other bounded kernels include Cauchy $\phi(r) = 1/(1 + (r/\gamma)^2)$ and sigmoidal $\phi(r) = 1/(1 + e^{k(r-\tau)})$.

We compare global-norm rewards $\mathcal{R}_G = \phi(\|\mathbf{r}\|_p)$ against SAR in two regimes. The first shows global-norm rewards are almost surely zero at random initialization in high dimensions, so RL cannot start. The second shows that even when the agent makes local progress on a single constraint, global-norm rewards mask it whenever the rest of the configuration is disordered, so RL cannot continue. SAR avoids both pathologies. Full statements and proofs are in Appendices~\ref{proof:reward_volume} and~\ref{app:proof_1}.

\paragraph{Signal availability in high dimensions.} For the policy gradient to be non-zero, the agent must encounter a non-negligible reward signal. Consider the residual space $\Omega = [0, M]^C$ (all possible per-constraint violation vectors), where $C$ is the constraint count. Also, consider the \emph{effective reward volume} $V_\epsilon \subset \Omega$ where $\mathcal{R}(\mathbf{r}) > \epsilon$ (the region where reward exceeds threshold $\epsilon$). We prove (Appendix~\ref{proof:reward_volume}) that as $C \to \infty$, $\mathrm{Vol}(V_\epsilon^G)/\mathrm{Vol}(\Omega) \to 0$ for global-norm rewards, while $\mathrm{Vol}(V_\epsilon^{\text{SAR}})/\mathrm{Vol}(\Omega) \to 1$ for SAR. While pretrained LLMs are not truly random, base model performance on GCS remains weak (Table~\ref{tab:gcs_merged}: Qwen3-8B achieves only 0.18 Hard SR), and early RL rollouts still sample broadly across $\Omega$. The probability of obtaining useful gradient signal in this regime decays exponentially for global norms and converges to one for SAR.

\paragraph{Gradient sensitivity.} Learning efficiency also depends on whether a local improvement in constraint satisfaction produces a measurable change in the reward, captured by $\|\nabla_{r_i}\mathcal{R}\|$. We show (Appendix~\ref{app:proof_1}) that the global-norm gradient $\|\nabla_{r_i}\mathcal{R}_G\| = |\phi'(L)| \cdot (r_i/L)^{p-1}$ is structurally coupled to the total error $L = \|\mathbf{r}\|_p$, whereas the SAR gradient $\|\nabla_{r_i}\mathcal{R}_{\text{SAR}}\| = |\phi'(r_i)|$ is strictly local. When the global configuration remains disordered ($L \gg r_i$), whether due to a single divergent outlier or aggregate error from a worsening subset, the global-norm signal for the successful constraint $i$ vanishes. A subset of worsening residuals can ``veto'' reinforcement for constraints that are actually improving. On the other hand, the SAR signal remains non-zero and independent of $L$, allowing valid sub-solutions to be protected from global noise.

\paragraph{Why compare SAR with global-norm rewards?} Mean Squared Error (MSE) and Sum of Squared Errors (SSE) are the native objectives of most mainstream geometry constraint solvers (SolveSpace, FreeCAD, GeoSolver), making $\exp(-\text{SSE}/T_{\text{mse}})$ the first dense reward a practitioner would try. In our experiments we set $T_{\text{mse}} = 10$, chosen because it maximizes reward spread on partially correct solutions. It gives MSE the best possible discrimination before comparing it with SAR. Despite this favorable tuning, MSE still collapses $60\%$ of partially correct solutions to near-zero (Appendix~\ref{app:gradient_analysis}), making the comparison conservative rather than a strawman. The standard sparse RLVR reward $\mathbb{I}[\|\mathbf{r}\|_2^2 < \epsilon]$ (1 for a fully correct solution, 0 otherwise) is itself a special case: as $T_{\text{mse}} \to 0$, the kernel $\exp(-\|\mathbf{r}\|_2^2/T_{\text{mse}})$ collapses to the indicator $\mathbb{I}[\|\mathbf{r}\|_2^2 = 0]$. 

\subsection{Reward function for GCS}
\label{subsec:gcs_reward}

A pure SAR reward is robust to outlier gradient masking, but two domain-specific failure modes prevent it from being sufficient on its own.

\paragraph{Reward plateau.} The dense SAR component rewards partial constraint satisfaction monotonically: a configuration that satisfies 14 of 16 constraints receives high reward even though it is, by GCS standards, wrong. The agent has little incentive to push the final residuals through the strict $\|\mathbf{r}\|_2^2 < 10^{-3}$ threshold required for a valid solution. A sparse outcome bonus, activated only when all constraints are simultaneously satisfied, supplies that force.

\paragraph{Geometric degeneracy.} Geometric constraints can be satisfied by trivial configurations, for instance, by collapsing all points to a single coordinate or placing every line on the same axis. These solutions can have no residual but are useless as geometric constructions. A degeneracy penalty, deducting reward proportional to the number of detected degenerate substructures, prunes them. 

\paragraph{Composite reward.} Combining the three components, we use:
\begin{equation}\label{eq:main_reward}
    \mathcal{R} = \underbrace{\frac{w}{C} \sum_{i=1}^{C} e^{-r_i/T}}_{\text{Dense Shaping (SAR)}} + \underbrace{\mathbb{I}_{\text{suc}} \cdot R_{\text{bonus}}}_{\text{Sparse Bonus}} - \underbrace{\min(4, C_{\text{deg}})}_{\text{Degeneracy Penalty}}
\end{equation}
where $\mathbb{I}_{\text{suc}}$ activates when $\|\mathbf{r}\|_2^2 < 10^{-3}$, $C_{\text{deg}}$ counts detected degenerate substructures (capped at 4), and we set $w = 6.0$, $T = 0.1$, and $R_{\text{bonus}} = 4.0$ to bound the reward in $[-4, 10]$ and create a sharp landscape that demands high precision. Hybrid dense and sparse structures have improved performance in general mathematical reasoning \citep{hybridreinforcementwhen}. Our design adapts this pattern to the precision-critical GCS setting. Appendix~\ref{app:engine} illustrates how the reward decreases as constructions deviate from valid geometric configurations (Figure~\ref{fig:reward_grid}).

\section{The PyGeoX-RL Framework}
\vspace{-0.5em}

In this section, we present PyGeoX-RL, an RL environment for teaching LLMs geometric precision.


\subsection{PyGeoX: A Programmable Geometric Constraint Environment}
\label{sec:pygeox}

GCS problems are typically under-constrained and possess an infinite solution space. As seen in Figure \ref{fig:PyGeoX}, the construction remains valid regardless of translation, rotation or scaling, meaning a direct coordinate-wise comparison with a ground-truth diagram is impossible. To align a LLM with geometric laws, we require an environment capable of rigorously quantifying ``correctness" while maintaining the flexibility to express complex design intent. We introduce PyGeoX, a lightweight, object-oriented Python framework for geometric constraint solving. Unlike traditional CAD tools that rely on manual pointer-clicking, PyGeoX is designed as a fully programmable DSL, enabling humans and LLMs to define geometry declaratively.





\begin{figure}[htp!]
    \centering
    \includegraphics[width=\textwidth]{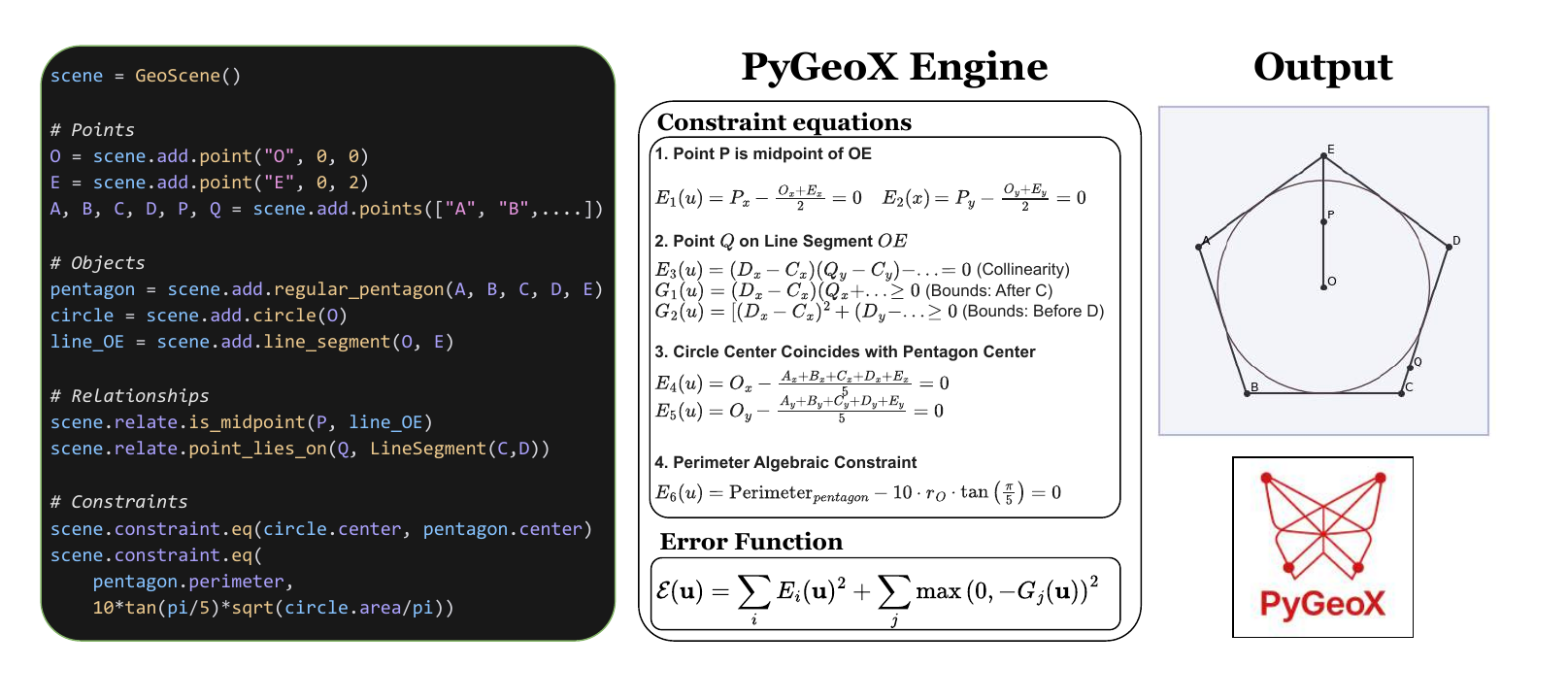} 
    \caption{The PyGeoX Symbolic Representation of a geometric figure (a pentagon ABCDE with incircle O and related line segment OE with midpoint P). The PyGeoX engine translates each geometric relationship into a system of symbolic constraints, $E_i = 0$ (equality) or $G_i > 0$ (inequality). These constraints are aggregated into a single error function $\mathcal{E}(\mathbf{u})$, where the input vector $\mathbf{u}$ comprises all geometric variables.} 
    \label{fig:PyGeoX}
\end{figure}
\vspace{-0.5em}

To overcome the limitations of the discrete and opaque engines discussed in Section~\ref{sec:related_work}, PyGeoX unifies symbolic expression with numeric feedback by implementing a three-stage Declarative-to-Symbolic-to-Differentiable Pipeline. As illustrated in Figure~\ref{fig:PyGeoX}, the engine first captures geometric intent through a high-level declarative DSL (left), which is lazily mapped into a stack of symbolic constraint equations via a SymPy backend (middle, top). These constraints are then aggregated into a single error function $\mathcal{E}(u)$ (middle, bottom), which serves two distinct roles: during data generation, PyGeoX actively optimizes $\mathcal{E}(u)$ to compute valid coordinates and during RL training, the engine evaluates the per-constraint residuals $\mathbf{r} = [r_1, \ldots, r_C]$ extracted from $\mathcal{E}(u)$ and applies the composite reward function (Eq.~\ref{eq:main_reward}) to provide dense training signal. Crucially, PyGeoX handles both equalities and inequalities as residuals, enabling the reward to capture strict, non-strict, and not-equal constraints. The declarative interface supports 35 geometric objects and 38 relationships, with arbitrary algebraic constraints compiled via Numba's JIT for 10–50× speedups over Python. Further details on PyGeoX are given in Appendix~\ref{app:engine}.

\subsection{Data Generation for RL and PyGeoX-Bench}
\label{sec:data_gen}

\begin{figure}[htp!]
    \centering
    \includegraphics[width=\textwidth]{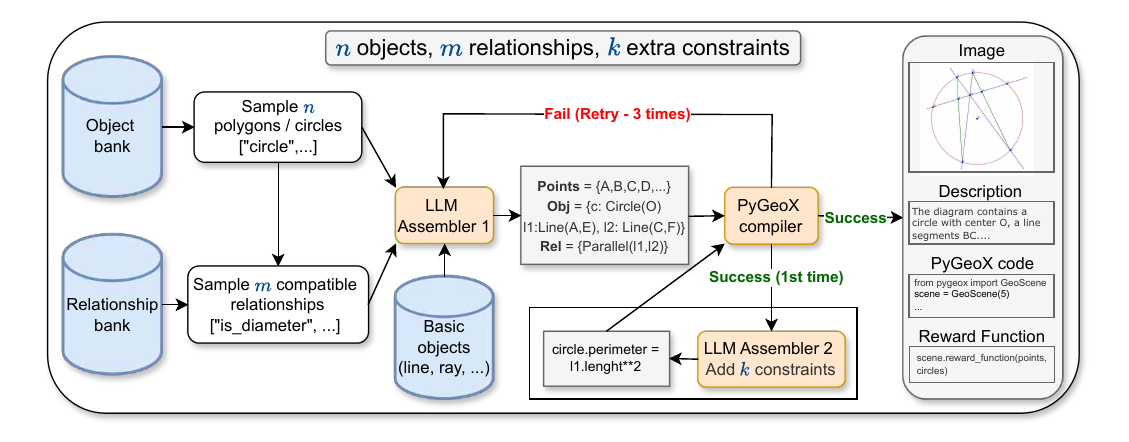} 
    \caption{Data generation pipeline.}
    \label{fig:data_training}
\end{figure}
\vspace{-0.25em}

As illustrated in Figure \ref{fig:data_training}, we utilized PyGeoX to construct a synthetic curriculum of 100k geometric problems. To ensure diversity and mitigate the repetitive outputs typical of unguided prompting, the pipeline seeds a Qwen3-30B-A3B model with $n$ objects, $m$ relationships and $k$ extra constraints randomly sampled from a weighted vocabulary of 35 object and 38 relationship types. The LLM expands these seeds into fully defined geometric specifications which are subsequently validated against the PyGeoX numerical solver and any configuration that fails to converge or exhibits degeneracy is automatically discarded. Each training sample is a 4-tuple: (1) natural language description of the diagram, (2) PyGeoX DSL code, (3) the compiled per-constraint reward function, and (4) a rendered image for visualization. These training samples are stratified into three difficulty tiers: Easy (single-polygon, $\sim$13 objects, 8 constraints), Medium (two-polygon, $\sim$15 objects, 10 constraints), and Hard (three-polygon, $\sim$23 objects, 16 constraints). Examples are found in Appendix~\ref{sec:appendix_examples}. This  generation pipeline was also employed to create our automated evaluation benchmark, PyGeoX-Bench, containing 100 problems for each difficulty tier.

\subsection{Training Methodology}
\label{subsec:training}

We evaluate SAR both as an RL reward and as an SFT sample-weighting scheme, using the same base model, Qwen3-8B. All agent outputs are executed within a sandboxed environment equipped with standard scientific libraries (\texttt{numpy}, \texttt{scipy}, \texttt{sympy}) under a 90-second timeout. Failure to produce a valid coordinate dictionary results in a reward of zero. The system prompt (Appendix~\ref{sec:appendix_prompt}) provides successful examples illustrating both numerical optimization and constructive algebraic strategies.

\paragraph{Supervised fine-tuning.} Standard SFT protocols filter data for strict correctness, discarding any solution that fails the final check. We hypothesized that this binary filtering throws away valuable logic contained in partially correct reasoning traces, where the model gets most constraints right but fails on a few. We generated 15{,}666 reasoning traces on medium-difficulty problems with a Qwen3-32B teacher and trained a Qwen3-8B student using the weighted SFT objective
\begin{equation*}
\mathcal{L} = -\sum_i \tilde{\mathcal{R}}_i \sum_j \log p(y_{i,j} \mid y_{i,<j}, \mathbf{x}_i),
\end{equation*}
where each sample is weighted by its normalized reward $\tilde{\mathcal{R}}_i \in [0, 1]$. This retains partial-credit logic from non-perfect solutions instead of discarding it. We adjust the number of epochs across variants so each model sees the same total token volume, and select the best checkpoint by validation loss. We use LoRA ($r{=}8$, $\alpha{=}32$, dropout $0.05$) with the \texttt{ms-swift} \citep{mswift} framework, learning rate $1 \times 10^{-5}$, cosine decay with 10\% warmup, and maximum sequence length $18{,}196$ tokens. 

\paragraph{Reinforcement learning.} We train each reward variant with Group Relative Policy Optimization (GRPO) via \texttt{OpenRLHF} \citep{openrlhf} on 10k medium-difficulty problems drawn from the procedural corpus, generating $G = 8$ rollouts per problem with a maximum generation length of $8{,}192$ tokens. Crucially, all RL runs are cold-started from the base Qwen3-8B model, not from any SFT checkpoint; this isolates the effect of reward design from any SFT initialization bias. To further isolate the impact of reward design, we keep optimization controlled across variants: learning rate $5 \times 10^{-6}$ with cosine decay, KL coefficient $0.01$, asymmetric clipping $\epsilon \in [0.2, 0.3]$, and discount factor $\gamma = 1.0$. Batch composition and the remaining hyperparameters are in Appendix~\ref{app:reproducibility}.

Table~\ref{tab:gcs_merged} reports the full five-way reward ablation across both settings.



\begin{table*}[t]
\centering
\caption{Five-way reward ablation on Qwen3-8B across SFT and RL. \textbf{SR} = solving rate (all constraints satisfied, $\|\mathbf{r}\|_2^2 < 10^{-3}$). \emph{S+D} denotes the sparse success bonus and degeneracy penalty (Eq.~\ref{eq:main_reward}); \emph{SAR+S+D} is the composite reward proposed in this paper. }
\label{tab:gcs_merged}
\resizebox{1\textwidth}{!}{
\begin{tabular}{l l c c c c}
\toprule
\textbf{Model} & \textbf{Reward} & \textbf{PyGeoX (E)} & \textbf{PyGeoX (M)} & \textbf{PyGeoX (H)} & \textbf{PyGeoX-Wild} \\
\midrule
Qwen3-8B (Base)        & none         & 0.55 & 0.37 & 0.18 & 0.57 \\
\midrule
Qwen3-8B-SFT           & SAR          & \textbf{0.59} & \textbf{0.55} & 0.23 & 0.56 \\
Qwen3-8B-SFT           & MSE          & 0.48 & 0.26 & 0.22 & 0.53 \\
Qwen3-8B-SFT           & Sparse       & 0.51 & 0.42 & 0.23 & 0.57 \\
Qwen3-8B-SFT           & SAR+S+D (Ours) & 0.55 & 0.47 & \textbf{0.32} & \textbf{0.65} \\
Qwen3-8B-SFT           & MSE+S+D      & 0.26 & 0.25 & 0.04 & 0.48 \\
\midrule
Qwen3-8B-RL            & SAR          & 0.46 & 0.41 & 0.09 & 0.62 \\
Qwen3-8B-RL            & MSE          & 0.47 & 0.41 & 0.10 & 0.55 \\
Qwen3-8B-RL            & Sparse       & \textbf{0.63} & \textbf{0.51} & 0.35 & 0.59 \\
Qwen3-8B-RL            & SAR+S+D (Ours) & 0.62 & 0.50 & \textbf{0.41} & \textbf{0.66} \\
Qwen3-8B-RL            & MSE+S+D      & 0.59 & 0.47 & 0.18 & 0.60 \\
\bottomrule
\end{tabular}
}
\end{table*}

\section{Experimental results}
\label{sec:experiments}

This section presents our evaluation protocol, comparative ablations, and analysis.

\subsection{Benchmarks and metrics}
\label{subsec:bench_metrics}

\textbf{PyGeoX-Bench} comprises 300 problems (100 each at Easy, Medium, Hard) drawn from the same procedural pipeline as training data but held-out for evaluation. \textbf{PyGeoX-Wild} is an out-of-distribution (OOD) diagnostic of 86 problems adapted from a published middle-school geometry benchmark (citation withheld for anonymity, included with code release). Unlike PyGeoX-Bench, PyGeoX-Wild uses human-authored natural prose rather than templated descriptions, invokes constraint types absent from the training distribution, and requires chaining geometric identities in unseen combinations. With 86 problems, PyGeoX-Wild is sized to detect whether performance gaps persist OOD, not to claim broad geometry reasoning. Performance is quantified by the one-shot \textit{solving rate} (SR), the fraction of problems satisfying $\|\mathbf{r}\|_2^2 < 10^{-3}$.

\subsection{Comparative analysis}
\label{subsec:comparative}

Both RL and SFT improve over the base model across most reward variants, confirming that GCS is learnable through policy optimization.

\paragraph{Headline finding: SAR beats the field-standard sparse RLVR baseline.}  The most direct comparison for our reward-design contribution is against \emph{sparse} rewards, since the binary success indicator $\mathbb{I}[\|\mathbf{r}\|_2^2 < \epsilon]$ is the standard RLVR signal. Adding our saturating per-constraint dense signal on top of this sparse outcome (SAR+S+D) substantially outperforms sparse alone on the hard tier (RL: $0.41$ vs.\ $0.35$; SFT: $0.32$ vs.\ $0.23$) and on PyGeoX-Wild (RL: $0.66$ vs.\ $0.59$; SFT: $0.65$ vs.\ $0.57$). The sparse outcome continues to act as a hard correctness gate and SAR adds the partial-progress signal that allows learning on problems the sparse indicator alone reports as failures.

\paragraph{The composite design is essential.} Pure dense rewards in isolation underperform: SAR alone and MSE alone reach only $0.09$--$0.10$ Hard SR under RL, well below sparse alone ($0.35$). Adding the sparse outcome bonus to either dense signal raises Hard SR to $0.18$--$0.41$. The dense component on its own does not converge to exact solutions. 

\paragraph{Secondary finding: MSE corroborates the gradient-masking analysis.} Among dense components, SAR wins decisively over the MSE reward ($\exp(-\text{SSE}/10)$): SAR+S+D vs.\ MSE+S+D yields $0.32$ vs.\ $0.04$ in SFT (an $8\times$ gap on Hard) and $0.41$ vs.\ $0.18$ in RL ($2.3\times$). Strikingly, MSE+S+D underperforms Sparse alone on the hard tier ($0.18$ vs.\ $0.35$ in RL), in line with the gradient-masking analysis: replacing SAR's per-constraint kernel with the standard global-norm objective actively harms training. An empirical analysis of 3{,}893 partially correct solutions (some but not all constraints satisfied) corroborates this: $97\%$ of SAR rewards fall in the informative range $[0.1, 0.9]$, whereas $60\%$ of MSE rewards collapse to near-zero, leaving GRPO unable to differentiate better partial solutions from worse ones (Appendix~\ref{app:gradient_analysis}).


\paragraph{Cross-distribution stability.} The same ranking emerges on PyGeoX-Bench (300 problems) and on PyGeoX-Wild (86 problems with entirely different problem sources, language, and constraint combinations). Two findings are robust across both: (i) the composite dense+sparse design is essential and SAR+S+D leads in both SFT (0.65) and RL (0.66) on PyGeoX-Wild; (ii) MSE+S+D actively harms learning, underperforming Sparse alone in SFT (0.48 vs.\ 0.57). The consistency suggests the ranking is not an artifact of the training distribution.

\paragraph{Frontier models.} For context, we compare our 8B model 
against DeepSeek-V3.2 and several proprietary frontier systems 
evaluated zero-shot. On the Hard tier, our model outperforms three 
of the four, with the strongest baseline reaching $0.51$ SR. Full 
numbers are reported in Appendix~\ref{app:frontier_context}. This 
comparison is included to contextualize task difficulty, not as a 
controlled baseline.

\paragraph{Token efficiency.} As shown in Figure~\ref{fig:reward_analysis}, the SAR+S+D-trained model averages $\sim$4{,}060 generated tokens per task, against $\sim$5{,}260 for the sparse baseline (a 22.8\% reduction). SAR appears to enable the model to find geometric solutions more directly, without the lengthy numerical-search traces the sparse-trained model produces.

\section{Discussion and Limitations}
\label{sec:discussion}

A natural question is whether the model actually internalized geometric law or merely memorized training examples. Three lines of evidence support internalization:
\begin{enumerate}[leftmargin=*, itemsep=2pt, topsep=2pt]
    \item \textbf{Constructive reasoning traces.} About $90\%$ of successful Hard-tier traces follow constructive ruler-and-compass-style derivations, reasoning step-by-step through geometric properties rather than emitting literal lookups or calling a black-box optimizer (Appendix~\ref{sec:appendix_examples}).
    \item \textbf{OOD transfer.} SAR+S+D obtains the highest PyGeoX-Wild SR in both SFT (0.65) and RL (0.66), despite three-way distribution shift: human-authored prose, unseen constraint types (angle bisectors, equal arc lengths), and novel multi-step reasoning chains.
    \item \textbf{Combinatorial infeasibility of memorization.} A Hard-tier problem draws 3 polygons from $\sim$15 subtypes, $\sim$23 objects, and 16 constraints from 38 relationship types. A conservative lower bound on distinct configurations exceeds $10^{17}$, dwarfing the $\sim$10k RL training problems by 13 orders of magnitude. Memorization cannot account for the observed performance.
\end{enumerate}

Our work has limitations that frame the scope of these claims. All RL and SFT results use Qwen3-8B as the base model. We attempted SFT on Qwen3-1.7B and Llama-3.1-8B-Instruct, but found the base model too weak at math and instruction following for this task, with near-zero base performance and unstable training across every reward configuration (Hard SR $\leq 0.02$). This suggests strong math and code base model capabilities are a prerequisite for GCS at this scale. A multi-scale study across more recent models such as Qwen3-4B, Qwen3-14B, Mistral-7B, and Gemma-4 would strengthen the paper but exceeded our compute budget. The current engine also targets 2D static geometry, while kinematic synthesis and 3D CAD extension require additional symbolic primitives.

\section{Conclusion}
\label{sec:conclusion}

We presented PyGeoX-RL, a framework that formulates end-to-end Geometric Constraint Solving as an LLM alignment task: natural language to exact metric coordinates verified by per-constraint solver residuals. By having the LLM directly emit coordinates rather than delegating to a DSL or external solver API, our approach enables open-ended geometric synthesis unconstrained by predefined vocabularies. We released the PyGeoX engine and DSL, the PyGeoX-Bench evaluation suite, the PyGeoX-Wild OOD diagnostic, and the data-generation pipeline. Within this infrastructure we identified Outlier Gradient Masking and showed that SAR+S+D outperforms field-standard rewards on hard GCS problems. Beyond GCS, SAR applies to any RLVR setting where a solver returns a multi-dimensional residual over many constraints: physical simulation, scene layout or robotic manipulation. We release the engine, data, and benchmark to accelerate further research into solver-grounded alignment for precision-critical domains (see ``Software and Data'').

\section*{Reproducibility and Software}

We release the full research stack: PyGeoX engine and DSL, training and evaluation pipeline, PyGeoX-Bench (300 problems), PyGeoX-Wild (86 problems), training data, and model checkpoints at \url{https://github.com/Huawei-AI4Math/PyGeoX}. The repository contains all configuration files (SFT and RL hyperparameters, training scripts, evaluation harness) needed to reproduce every row of Table~\ref{tab:gcs_merged}, main-text experimental settings are summarized in Section~\ref{sec:experiments} and consolidated alongside compute and seeds in Appendix~\ref{app:reproducibility}.
\bibliographystyle{plainnat}
\bibliography{paper}

\newpage
\appendix

\section{Theorem statements and proofs}

We restate the two formal results summarized informally in Section~\ref{subsec:sar} and provide full proofs.

\begin{theorem}[Vanishing vs.\ Concentrating Signal Volumes]
\label{thm:reward_volume}\label{proof:reward_volume}
Consider a residual space bounded by a hypercube $\Omega = [0, M]^C$, the global-norm reward $\mathcal{R}_G = \phi(\|\mathbf{r}\|_p)$, and the SAR reward $\mathcal{R}_{\text{SAR}} = \sum_i \phi(r_i)$, where $\phi : [0, \infty) \to [0, 1]$ is continuous, monotonically decreasing to zero with $\lim_{x\to\infty}\phi'(x)=0$. Define the effective reward volume $V_\epsilon = \{\mathbf{r} \in \Omega : \mathcal{R}(\mathbf{r}) > \epsilon\}$. Then as the constraint count $C \to \infty$,
\[
\lim_{C \to \infty} \frac{\mathrm{Vol}(V_\epsilon^G)}{\mathrm{Vol}(\Omega)} = 0 \quad\text{and}\quad \lim_{C \to \infty} \frac{\mathrm{Vol}(V_\epsilon^{\text{SAR}})}{\mathrm{Vol}(\Omega)} = 1.
\]
\end{theorem}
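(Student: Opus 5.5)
We will interpret $\mathrm{Vol}(\cdot)/\mathrm{Vol}(\Omega)$ as a probability: draw $\mathbf{r}$ uniformly from $\Omega=[0,M]^C$, so that $r_1,\dots,r_C$ are i.i.d.\ $\mathrm{Unif}[0,M]$. Both limits then become concentration statements for sums of i.i.d.\ bounded variables, with $M$, $p$ and $\epsilon$ held fixed as $C\to\infty$. The global-norm reward is controlled by $\sum_i r_i^p$, which grows linearly in $C$. The SAR reward is a sum $\sum_i \phi(r_i)$ of i.i.d.\ terms whose mean is positive, so it also grows linearly in $C$. A Hoeffding (or Chebyshev) bound in each case should give both limits. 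We assume $\epsilon>0$ and that $\phi(0)>0$, i.e.\ $\phi\not\equiv 0$. The hypothesis $\phi'\to 0$ is not needed for this argument.

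\textbf{Global norm.} Since $\phi$ is monotonically decreasing, the set $\{x\ge 0:\phi(x)>\epsilon\}$ is an interval $[0,\rho_\epsilon)$ or $[0,\rho_\epsilon]$, and $\rho_\epsilon<\infty$ because $\phi\to 0$ and $\epsilon>0$. If $\epsilon\ge\phi(0)$ this set is empty and the volume is trivially $0$. Hence
\[
V^G_\epsilon\subseteq\{\mathbf{r}:\|\mathbf{r}\|_p\le\rho_\epsilon\}=\Bigl\{\textstyle\sum_i r_i^p\le\rho_\epsilon^p\Bigr\}.
\]
The variables $r_i^p$ are i.i.d.\ in $[0,M^p]$ with mean $m_p=M^p/(p+1)>0$. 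By Hoeffding,
\[
\Pr\Bigl(\textstyle\sum_i r_i^p\le C m_p/2\Bigr)\le\exp\bigl(-C m_p^2/(2M^{2p})\bigr).
\]
Once $C m_p/2>\rho_\epsilon^p$, we get $\mathrm{Vol}(V^G_\epsilon)/\mathrm{Vol}(\Omega)\le e^{-cC}\to 0$, which is the exponential decay mentioned in Section~\ref{subsec:sar}. An alternative route is to compare the volume of the $\ell_p$-ball of radius $\rho_\epsilon$ with $M^C$. That needs Gamma-function asymptotics, so the probabilistic route is simpler.

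\textbf{SAR.} Let $X_i=\phi(r_i)\in[0,1]$, which are i.i.d.\ with mean $\mu=\frac1M\int_0^M\phi(x)\,dx$. By continuity and $\phi(0)>0$, $\phi$ is positive on some $[0,\delta]$, so $\mu>0$. Hoeffding gives
\[
\Pr\Bigl(\textstyle\sum_i X_i\le C\mu/2\Bigr)\le e^{-C\mu^2/2}.
\]
For $C>2\epsilon/\mu$ the complement $\Omega\setminus V^{\text{SAR}}_\epsilon$ is contained in this event, so its relative volume tends to $0$ and the ratio for SAR tends to $1$.

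\textbf{Main obstacle.} The main difficulty is bookkeeping of hypotheses rather than analysis. First, the SAR claim relies on $\mathcal{R}_{\text{SAR}}$ being unnormalized, so that it grows like $C\mu$. Under the $\frac{1}{C}$-normalized version in Eq.~\eqref{eq:main_reward}, the same law of large numbers gives the limit $1$ only for $\epsilon<\mu$ (after rescaling by $w$), and the limit $0$ for $\epsilon>\mu$. I would state this caveat explicitly. Second, the degenerate cases ($\epsilon\ge\phi(0)$, $\phi\equiv 0$, and $\epsilon\le 0$) must be excluded or handled separately. The concentration steps themselves are routine.
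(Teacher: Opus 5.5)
Your proof is correct. The SAR half is essentially the paper's argument: the paper also takes $X_i=\phi(r_i)$, notes $\mu>0$, and applies Hoeffding, bounding the complement by $\exp(-2(C\mu-\epsilon)^2/C)$ where you use the threshold $C\mu/2$.

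The global-norm half takes a genuinely different route. The paper argues geometrically: it encloses $V^G_\epsilon$ in the $\ell_p$-ball of radius $R_{\max}=\phi^{-1}(\epsilon)$, says the relative volume is proportional to $(R_{\max}/M)^C$ with a $C$-dependent constant, and concludes using the added assumption $R_{\max}<M$. As written, that leaves the constant uncontrolled. Closing the step requires exactly the Gamma-function computation you sidestepped: the orthant part of the ball has relative volume $\bigl(R_{\max}\Gamma(1+1/p)/M\bigr)^C/\Gamma(1+C/p)$.

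Your approach treats $\|\mathbf r\|_p^p$ as a sum of i.i.d.\ bounded terms with mean $CM^p/(p+1)$ and lets Hoeffding push it past the fixed level $\rho_\epsilon^p$. It is self-contained, gives explicit constants, needs no relation between $\rho_\epsilon$ and $M$, and handles both halves with one concentration tool. The geometric route, once the Gamma factor is tracked, buys two things:
\begin{itemize}
\item a sharper, super-exponential rate (of order $C^{-C/p}$ up to exponential factors) versus your $e^{-cC}$;
\item under the paper's hypothesis $R_{\max}<M$, coverage of $p=\infty$. Your argument excludes that case, and there the hypothesis is genuinely needed, since the ratio is then $\min(1,\rho_\epsilon/M)^C$.
\end{itemize}

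Your hypothesis bookkeeping is also sound. The paper's proof silently assumes $0<\epsilon<\phi(0)$, and it upgrades ``monotonically decreasing'' to ``strictly decreasing'' to get $\mu>0$. Your $\phi(0)>0$ together with continuity is a weaker sufficient condition. The hypothesis $\phi'\to0$ plays no role in either proof.
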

\begin{proof}
\textbf{Part 1: Global Norm (Vanishing Volume)}
For a Global Norm reward $\mathcal{R}_G = \phi(\|\mathbf{r}\|_p)$ to exceed $\epsilon$, we require $\|\mathbf{r}\|_p < \phi^{-1}(\epsilon) := R_{\max}$. The effective reward region $V_\epsilon^G \subseteq \Omega = [0, M]^C$ is the set of all residual vectors satisfying this constraint. This region is bounded by a $C$-dimensional $\ell_p$-ball of radius $R_{\max}$. The volume ratio can be bounded as:
\[
\frac{\text{Vol}(V_\epsilon^G)}{\text{Vol}(\Omega)} \leq \frac{\text{Vol}(\text{$C$-ball of radius } R_{\max})}{M^C} \propto \left(\frac{R_{\max}}{M}\right)^C
\]

where the proportionality constant depends on the dimension $C$ and norm type $p$, but crucially, $R_{\max}$ is a constant determined solely by $\epsilon$ and the kernel function $\phi$. Since $0 < R_{\max} < M$ (assuming the effective reward region is strictly contained in the domain), we have:
\[
\lim_{C \to \infty} \left(\frac{R_{\max}}{M}\right)^C = 0
\]

\textbf{Part 2: SAR (Concentrating Volume)}
For SAR, the condition is $\sum_{i=1}^C \phi(r_i) > \epsilon$.
Let $X_i = \phi(r_i)$ be a random variable corresponding to the kernel value of a randomly sampled point in $\Omega$. Since $\phi$ is strictly decreasing and bounded $[0,1]$, and the domain is $[0,M]$, the expected value $\mu = \mathbb{E}[X_i]$ is strictly positive.
By the Law of Large Numbers, the sum $\sum_i X_i$ concentrates around $C\mu$.
For any fixed threshold $\epsilon$, the condition $\epsilon < C\mu$ is eventually satisfied as $C \to \infty$ since $C\mu$ grows without bound while $\epsilon$ remains fixed. Formally, using Hoeffding's inequality for $C$ bounded i.i.d.\ variables in $[0,1]$, the measure of the complement set (where reward $\le \epsilon$) decays exponentially:
$$ \frac{\text{Vol}(\Omega \setminus V_\epsilon^{SAR})}{\text{Vol}(\Omega)} \le \exp\left(-\frac{2(C\mu - \epsilon)^2}{C}\right) $$
Thus, the relative volume of the reward region approaches 1.
\end{proof}

\begin{corollary}[Initialization Success]
Under random initialization on $\Omega$, the probability of obtaining a meaningful learning signal ($\mathcal{R} > \epsilon$) decays exponentially to $0$ for global-norm rewards and converges to $1$ for SAR.
\end{corollary}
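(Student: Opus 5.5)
The corollary is a probabilistic restatement of Theorem~\ref{thm:reward_volume}, so I will reduce it to that result. Interpret ``random initialization on $\Omega$'' as drawing the residual vector $\mathbf{r}$ uniformly from $\Omega=[0,M]^C$. Then for any reward $\mathcal{R}$,
\[
\Pr[\mathcal{R}(\mathbf{r})>\epsilon]=\frac{\mathrm{Vol}(V_\epsilon)}{\mathrm{Vol}(\Omega)}.
\]
The two limits therefore follow directly from Theorem~\ref{thm:reward_volume}. What remains is to upgrade the global-norm statement from ``tends to $0$'' to ``decays exponentially,'' and to record the SAR rate.

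\textbf{Global norm.} Set $R_{\max}=\phi^{-1}(\epsilon)$, taken as the generalized inverse $\sup\{x:\phi(x)>\epsilon\}$ so that only monotonicity of $\phi$ is needed. The event $\mathcal{R}_G>\epsilon$ is contained in $\{\|\mathbf{r}\|_p\le R_{\max}\}$. I would avoid the ball-volume constant, whose $C$-dependence is left implicit in the theorem, and instead use the cruder containment
\[
\{\|\mathbf{r}\|_p\le R_{\max}\}\subseteq\{\|\mathbf{r}\|_\infty\le R_{\max}\},
\]
which holds for every $p\in[1,\infty]$. The right-hand set has probability exactly $(\min(R_{\max},M)/M)^C$. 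Whenever $R_{\max}<M$, this is $e^{-cC}$ with $c=\log(M/R_{\max})>0$, which is a clean exponential rate.

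\textbf{SAR.} The $X_i=\phi(r_i)$ are i.i.d.\ in $[0,1]$ with mean $\mu=\frac1M\int_0^M\phi>0$. The mean is positive because $\phi$ is continuous and not identically zero near $0$; I need $\phi(0)>0$, which is implicit in the setup. For $C>\epsilon/\mu$, Hoeffding's inequality gives
\[
\Pr\Bigl[\textstyle\sum_i X_i\le\epsilon\Bigr]\le\exp\bigl(-2(C\mu-\epsilon)^2/C\bigr)\to0,
\]
so the success probability converges to $1$, and does so exponentially fast.

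\textbf{Main obstacle.} The hypotheses need care rather than new mathematics. The global-norm result requires $R_{\max}<M$, i.e.\ $\epsilon>\phi(M)$. Otherwise, for small $C$, the reward exceeds $\epsilon$ everywhere on part of the cube, and the bound is vacuous until $C$ is large. I would state this assumption explicitly, as the theorem does, and note that the $\ell_\infty$ containment makes the exponential rate independent of $p$.
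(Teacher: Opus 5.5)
Your argument is correct. The reduction via $\Pr[\mathcal{R}>\epsilon]=\mathrm{Vol}(V_\epsilon)/\mathrm{Vol}(\Omega)$ and the SAR half (Hoeffding with $\mu=\frac1M\int_0^M\phi>0$) coincide with the paper. Your remark that $\phi(0)>0$ is needed is a fair tightening, since the paper gets $\mu>0$ from strict monotonicity.

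The genuine difference is the global-norm step. The paper bounds $V_\epsilon^G$ by the full $\ell_p$-ball of radius $R_{\max}$ and writes the ratio as $\propto(R_{\max}/M)^C$, with a constant that depends on $C$ and is never controlled. As written, that does not justify the limit, let alone an exponential rate. For instance, at $p=\infty$ the full-ball bound is $(2R_{\max}/M)^C$, which does not vanish when $M/2\le R_{\max}<M$. Your containment $\{\|\mathbf{r}\|_p\le R_{\max}\}\cap\Omega\subseteq[0,\min(R_{\max},M)]^C$ gives an explicit, constant-free bound $e^{-C\log(M/R_{\max})}$ that holds uniformly for $p\in[1,\infty]$, which is exactly the exponential decay the corollary claims.

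What your route gives up is sharpness. If you compute the paper's volume honestly on the positive orthant, it is $\Gamma(1+1/p)^C R_{\max}^C/\Gamma(1+C/p)$. This decays super-exponentially for every finite $p$, with no condition relating $R_{\max}$ to $M$. So your closing claim that the result \emph{requires} $R_{\max}<M$ is true only at $p=\infty$: there $\phi(M)>\epsilon$ forces $\mathcal{R}_G>\epsilon$ on all of $\Omega$. For finite $p$ the hypothesis is an artifact of the $\ell_\infty$ bound. Note also that when $R_{\max}\ge M$ your bound is vacuous for every $C$, not only for small $C$.
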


\begin{theorem}[Robustness to Global Error]
\label{thm:gradient_decoupling}\label{app:proof_1}
Let $\mathbf{r} \in \mathbb{R}^C$ be the residual vector and $L = \|\mathbf{r}\|_p$ the global $p$-norm ($p \ge 1$). The gradient magnitudes of the global-norm and SAR rewards with respect to a single residual $r_i$ are
\[
\|\nabla_{r_i}\mathcal{R}_G\| = |\phi'(L)| \cdot (r_i/L)^{p-1}, \qquad \|\nabla_{r_i}\mathcal{R}_{\text{SAR}}\| = |\phi'(r_i)|.
\]
The global-norm gradient is structurally coupled to the global error $L$, whereas the SAR gradient is strictly local.
\end{theorem}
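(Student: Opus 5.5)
The plan is a direct chain-rule computation, then an interpretation of the resulting formulas. Throughout I work on the residual domain $r_j \ge 0$, as in the Verification paragraph, and assume $\phi$ is differentiable, as the statement implicitly requires.

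\textbf{Step 1 (SAR gradient).} Since $\mathcal{R}_{\text{SAR}}(\mathbf{r}) = \sum_j \phi(r_j)$ and only the $j=i$ term depends on $r_i$, we have $\partial \mathcal{R}_{\text{SAR}}/\partial r_i = \phi'(r_i)$. Taking absolute values gives $|\phi'(r_i)|$. This depends only on $r_i$ and not on any $r_j$ with $j\neq i$, which is exactly the claimed locality.

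\textbf{Step 2 (norm derivative).} For the global norm, write $L = (\sum_j r_j^p)^{1/p}$ and differentiate at a point with $L>0$:
\[
\frac{\partial L}{\partial r_i} = \frac{1}{p}\Bigl(\sum_j r_j^p\Bigr)^{1/p-1} \cdot p\, r_i^{p-1} = \frac{r_i^{p-1}}{L^{p-1}}.
\]
On the nonnegative orthant we can drop absolute values. If one wants general real residuals, the same formula holds with $|r_i|^{p-1}\operatorname{sign}(r_i)$ in place of $r_i^{p-1}$, and the magnitude is unchanged.

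\textbf{Step 3 (global-norm gradient).} The chain rule then gives $\partial \mathcal{R}_G/\partial r_i = \phi'(L)\,(r_i/L)^{p-1}$. Taking absolute values yields the stated formula $|\phi'(L)|\,(r_i/L)^{p-1}$.

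\textbf{Step 4 (coupling statement).} The formula contains two factors that depend on every coordinate through $L$, which is what ``structurally coupled'' means here.
\begin{itemize}
\item The factor $|\phi'(L)|$ tends to $0$ as $L\to\infty$, by the hypothesis $\lim_{x\to\infty}\phi'(x)=0$ from Theorem~\ref{thm:reward_volume}. For the Boltzmann kernel it decays exponentially in $L$.
\item The factor $(r_i/L)^{p-1}$ is at most $1$, since $r_i\le L$. For $p>1$ it tends to $0$ when $L \gg r_i$.
\end{itemize}
Concretely, fix $r_i$ and let another residual $r_k$ grow, for instance a single outlier. Then $L\ge r_k\to\infty$, so $\|\nabla_{r_i}\mathcal{R}_G\|\to 0$, while $\|\nabla_{r_i}\mathcal{R}_{\text{SAR}}\| = |\phi'(r_i)|$ stays constant. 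This makes precise the ``veto'' phenomenon described in Section~\ref{subsec:sar}.

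\textbf{Main obstacle: technical edge cases rather than substance.} Two points need care.
\begin{itemize}
\item At $L=0$, and for $p=1$ at points where some $r_j=0$ in the general signed case, the norm is not differentiable. I would restrict the claim to $L>0$, or interpret the formula via one-sided derivatives on the orthant.
\item For $p=1$ the factor $(r_i/L)^{0}$ equals $1$, so coupling enters only through $\phi'(L)$. The argument should state explicitly that the vanishing then relies on the decay of $\phi'$.
\end{itemize}
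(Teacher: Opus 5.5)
Your proposal is correct and takes the same route as the paper's proof. The SAR gradient follows from the sum rule, since only the $i$-th term depends on $r_i$, and the global-norm gradient follows from the chain rule with $\partial L/\partial r_i = L^{1-p} r_i^{p-1} = (r_i/L)^{p-1}$; your extra handling of $L>0$, signed residuals, the $p=1$ case, and the explicit outlier limit via $\phi'(x)\to 0$ goes slightly beyond what the paper writes (it leaves the vanishing argument to its corollary) without changing the argument.
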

\begin{proof}
\textbf{Part 1 (Global):} Let $L(\mathbf{r}) = (\sum_k r_k^p)^{1/p}$. By the Chain Rule, $\frac{\partial \mathcal{R}_G}{\partial r_i} = f'(L) \cdot \frac{\partial L}{\partial r_i}$.
The derivative of the $p$-norm is $\frac{\partial L}{\partial r_i} = \frac{1}{p}(\sum r_k^p)^{\frac{1}{p}-1} \cdot p r_i^{p-1} = L^{1-p} r_i^{p-1} = (\frac{r_i}{L})^{p-1}$.
Substituting this back yields $\frac{\partial \mathcal{R}_G}{\partial r_i} = f'(L) (\frac{r_i}{L})^{p-1}$.

\textbf{Part 2 (Additive):} We apply the sum rule to $\mathcal{R}_{SAR}$. Since $\mathcal{R}_A = \phi(r_i) + \sum_{j \neq i} \phi(r_j)$, the derivative of the sum of other terms with respect to $r_i$ is zero. The gradient is simply the derivative of the local term: $\phi'(r_i)$.
\end{proof}

\begin{corollary}[Suppression of Partial Solutions]
When the global configuration is disordered ($L \gg r_i$), whether due to a single divergent outlier ($r_j \to \infty$) or aggregate error from a worsening subset, the global-norm signal for the successful constraint $i$ vanishes, $\|\nabla_{r_i}\mathcal{R}_G\| \to 0$, while the SAR signal $\|\nabla_{r_i}\mathcal{R}_{\text{SAR}}\| = |\phi'(r_i)|$ remains non-zero and independent of $L$.
\end{corollary}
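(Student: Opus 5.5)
The plan is to read the corollary directly off the two gradient formulas of Theorem~\ref{thm:gradient_decoupling}, $\|\nabla_{r_i}\mathcal{R}_G\| = |\phi'(L)|\,(r_i/L)^{p-1}$ and $\|\nabla_{r_i}\mathcal{R}_{\text{SAR}}\| = |\phi'(r_i)|$. For the global-norm part, the first step is to fix what ``$L \gg r_i$'' means. I would make this precise as a limit in which the successful residual $r_i$ is held fixed and the remaining residuals $(r_j)_{j\neq i}$ vary. The case $r_i = 0$ is trivial: the factor $(r_i/L)^{p-1}$ vanishes for $p>1$.

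\textbf{Case (a): a single divergent outlier, $r_j \to \infty$.} Since $L \ge r_j$, we get $L \to \infty$. I would then invoke the standing assumption from Theorem~\ref{thm:reward_volume} that $\lim_{x\to\infty}\phi'(x) = 0$, which gives $|\phi'(L)| \to 0$. For $p \ge 1$ the factor $(r_i/L)^{p-1}$ lies in $[0,1]$, because $r_i \le L$. Hence the product tends to $0$. This argument covers $p=1$ as well.

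\textbf{Case (b): aggregate error from a worsening subset.} Here several $r_j$ grow, each possibly only moderately, but $L \to \infty$ as $|S| \to \infty$ or as the subset's residuals increase. The same argument applies verbatim. There is a complementary regime where $L$ stays bounded but $r_i/L \to 0$, for example when $r_i \to 0$ is improving while the others sit at a fixed disordered level. In that regime I would use $p>1$: $(r_i/L)^{p-1} \to 0$, and $|\phi'(L)|$ is bounded on the compact range of $L$, since $\phi$ is $C^1$ there. This is the step I expect to be the main obstacle. For $p = 1$ with bounded $L$ the global gradient does \emph{not} vanish, since it reduces to $|\phi'(L)|$. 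I would therefore state explicitly that the vanishing claim requires either $L \to \infty$ or $p > 1$, and present the corollary under that reading.

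\textbf{The SAR part.} The formula $|\phi'(r_i)|$ contains no $r_j$ with $j \neq i$. Independence of $L$ is therefore immediate from Part~2 of Theorem~\ref{thm:gradient_decoupling}. Nonvanishing needs $\phi'(r_i) \neq 0$, which monotonicity alone does not guarantee. I would verify it for the kernel actually used, the Boltzmann kernel $\phi(r) = e^{-r/T}$, where $|\phi'(r_i)| = e^{-r_i/T}/T > 0$ for every finite $r_i$. For a general SAR kernel I would add the hypothesis $\phi' < 0$ pointwise.
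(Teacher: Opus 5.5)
Your proposal is correct and takes the same route as the paper: the paper gives no separate proof of this corollary and reads it off the two gradient formulas of Theorem~\ref{thm:gradient_decoupling}, which is what you do. Your caveats are genuine refinements that the paper leaves implicit. For $p=1$ with bounded $L$ the global gradient is just $|\phi'(L)|$ and does not vanish. The SAR signal is nonzero only if $\phi'(r_i)\neq 0$, which fails for the paper's own Cauchy kernel at $r_i=0$. So restricting to $L\to\infty$ or $p>1$, and to the Boltzmann kernel (or assuming $\phi'<0$), is the right way to make the statement precise.
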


\section{PyGeoX Engine}
\label{app:engine}

The PyGeoX engine is architected as a Symbolic-to-Differentiable pipeline that lowers high-level geometric intent into an optimized numerical kernel. This architecture bridges the gap between discrete geometric logic and continuous optimization by maintaining a symbolic intermediate representation throughout the translation process.

\subsection{Object Model and API Structure}
PyGeoX represents a geometric scene as a graph $G = (V, E)$, where vertices $V$ are geometric objects and edges $E$ are constraints. The library is organized into three distinct namespaces that facilitate this construction:

\begin{enumerate}
    \item \textbf{Object Instantiation (\texttt{scene.add}):} This namespace handles the creation of geometric primitives. The library supports over 30 distinct object types, ranging from fundamental primitives (Points, Rays, Arcs) to a rich hierarchy of polygons (e.g., \texttt{RightTrapezoid}, \texttt{RegularOctagon}). To ensure logical consistency, PyGeoX utilizes a strict type hierarchy where specific properties are inherited automatically; for example, a \texttt{RegularPentagon} inherits from \texttt{RegularPolygon}, which in turn inherits from \texttt{Polygon}. This structure allows specific properties (e.g., apothem, internal angle) to be exposed automatically while preventing invalid operations on incompatible types.
    
    \item \textbf{Geometric Relationships (\texttt{scene.relate}):} These methods establish topological dependencies between objects. PyGeoX provides over 30 geometric relationships, covering incidence (e.g., \texttt{point\_lies\_on}, \texttt{collinear}), construction (e.g., \texttt{is\_circumcircle}, \texttt{is\_orthocenter}), and rigid body transformations (e.g., \texttt{rotation\_around\_point}, \texttt{mirror\_across\_line}). Internally, these high-level semantic relationships are decomposed into their constituent algebraic equations.
    
    \item \textbf{Property Constraints (\texttt{scene.constraint}):} This namespace provides the interface for defining arbitrary algebraic constraints (equalities and inequalities) on object properties.
\end{enumerate}

A distinguishing feature of PyGeoX is the ability to impose equality or inequality constraints on \textit{any} derived scalar property of a geometric object. While prior geometric reasoning systems such as AlphaGeometry \citep{trinh2024alphageometry} or FormalGeo \citep{zhang2023formalgeo} typically rely on a fixed set of geometric predicates (e.g., static relationships like \texttt{same\_length}), they lack the expressivity to handle arbitrary algebraic relationships between different object properties, for instance, to express that ``line 1 length is larger than the circle 1 area".

In contrast, PyGeoX allows the definition of constraints that bridge different geometric domains. For instance, a user can  enforce a relationship between two entirely different shapes, such as requiring the perimeter of a pentagon to equal the area of a circle (\texttt{scene.constraint.eq(pentagon.perimeter, circle.area)}). This flexibility drastically expands the scope of solvable problems beyond standard textbook Euclidean problems. 

\subsection{Symbolic Translation and Solving}
As illustrated in Figure \ref{fig:PyGeoX}, PyGeoX employs a lazy evaluation strategy, mapping declarative relationships to a symbolic buffer of SymPy equations rather than computing them instantly. This global view supports a diverse set of algebraic conditions, including strict ($S_k > 0$) and non-strict ($G_j \geq 0$) inequalities, enabling the expression of complex topological constraints.

Prior to evaluation, the symbolic stack undergoes automated simplification. The engine prunes redundant constraints and re-parameterizes primitives (reducing free variables by 20--40\%) while performing radical elimination to transform non-linear square roots into smooth polynomial forms. These optimized expressions are aggregated into a differentiable loss $\mathcal{E}(\mathbf{u})$ and compiled via Numba's JIT compiler.

Crucially, this compiled loss serves two distinct roles depending on the pipeline phase:
\begin{itemize}
    \item Data Generation: The engine actively guarantees geometric validity of the randomly generated diagrams. We optimize $\mathbf{u}$ against $\mathcal{E}(\mathbf{u})$ using Basin-hopping; if the solver fails to converge to near-zero error, the diagram is flagged as potentially contradictory or over-constrained and is automatically discarded.
    \item RL Training: The engine shifts to a passive role. The problem is extracted as a pair consisting of the natural language description of the diagram and the executable residuals $\{r_i\}$. The RL agent bears full responsibility for GCS, while the engine merely evaluates these residuals to compute the reward signal.
\end{itemize}

\subsection{Objective Function Compilation}

To provide the low-latency feedback required for Reinforcement Learning (RL), the symbolic residual stack is lowered into a scalar error function $\mathcal{E}(\mathbf{u})$ via Numba's Just-In-Time (JIT) compilation. This generates optimized machine code that utilizes efficient NumPy array operations, bypassing Python interpreter overhead and yielding 10--50$\times$ speedups. The objective function aggregates four distinct constraint classes into a single scalar loss:

\begin{equation}
\mathcal{E}(\mathbf{u}) = \alpha \sum_i E_i(\mathbf{u})^2 + \beta \sum_j \max(0, -G_j(\mathbf{u}))^2 + \beta \sum_k \max(0, \epsilon - S_k(\mathbf{u}))^2 + \gamma \sum_\ell \mathds{1}_{|N_\ell(\mathbf{u})| < \epsilon}
\label{eq:objective_function}
\end{equation}

where $E$, $G$, $S$, and $N$ represent equality ($=0$), non-strict inequality ($\geq 0$), strict inequality ($> 0$), and not-equal ($\neq 0$) constraints, respectively. The default weights are $\alpha = \beta = \gamma = 1.0$, with strictness tolerance $\epsilon = 10^{-4}$.

\subsection{Hybrid Global Optimization and Degeneracy Handling}

The resulting differentiable landscape is explored using a hybrid global optimization strategy. PyGeoX defaults to Basin-hopping with 1000 iterations, which alternates between stochastic coordinate perturbations (step size $0.5$, temperature $T=2$) and local L-BFGS-B refinement to escape non-convex local minima. Alternative methods include direct L-BFGS-B optimization (for well-conditioned problems) and dual annealing (for highly nonlinear systems).

To ensure structural validity and prevent coordinate collapse, where points converge to a single location to trivially satisfy distance constraints,the solver applies an optional separation penalty:
\begin{equation}
\mathcal{L}_{sep} = \delta \cdot \sum_{i \neq j} \max(0, d_{min}^2 - \|p_i - p_j\|^2)
\end{equation}
where $d_{min}$ is automatically computed as $1/200$ of the domain span (e.g., $d_{min} = 1.0$ for a $[-50,50]^2$ workspace) and $\delta = 1.0$ controls penalty strength. This ensures the engine provides the continuous, non-degenerate diagnostic signal required for model alignment.

\begin{figure}[h]
    \centering
    \renewcommand{\arraystretch}{1.2}
    \begin{tabular}{ccc}
        \includegraphics[width=0.3\columnwidth]{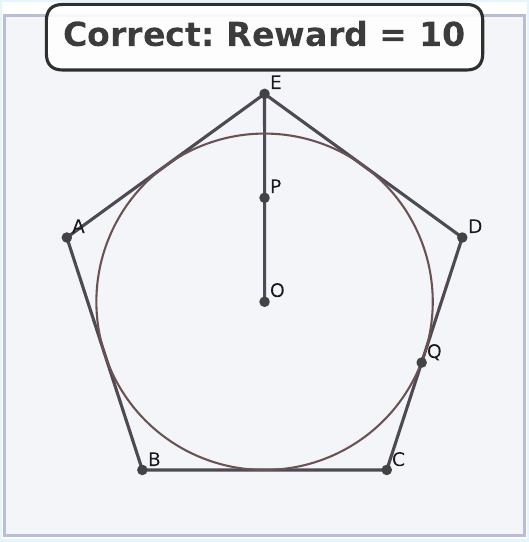} &
        \includegraphics[width=0.3\columnwidth]{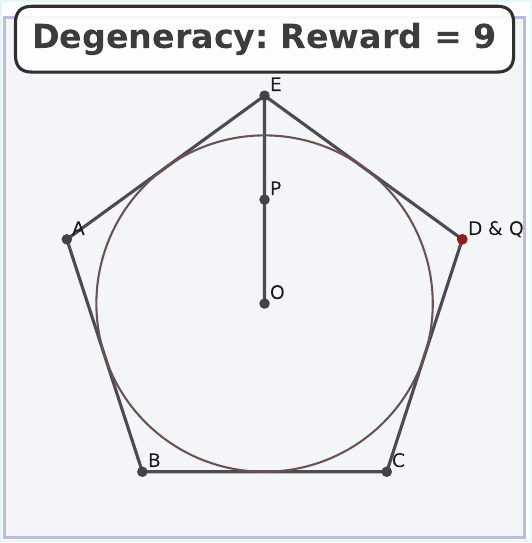} &
        \includegraphics[width=0.3\columnwidth]{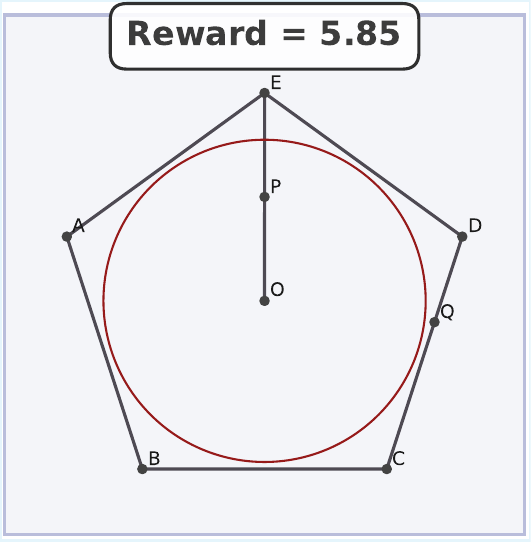} \\
        \includegraphics[width=0.3\columnwidth]{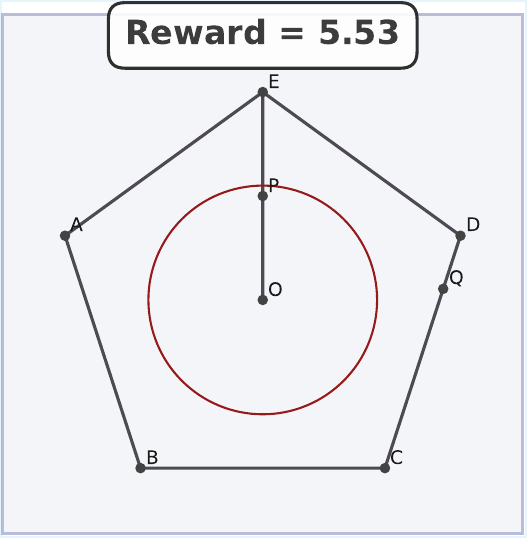} &
        \includegraphics[width=0.3\columnwidth]{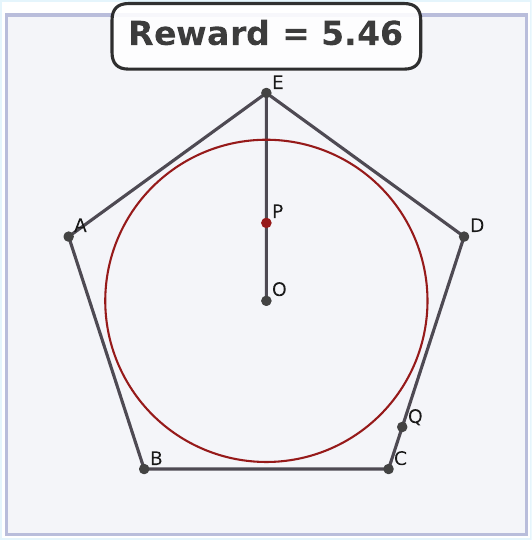} &
        \includegraphics[width=0.3\columnwidth]{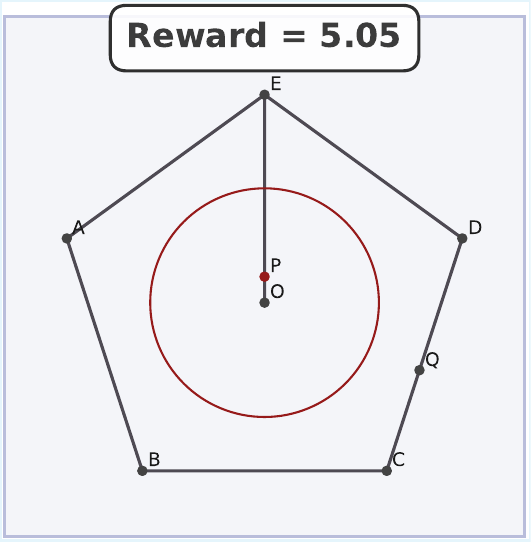} \\
    \end{tabular}
    \caption{Visual progression of the reward landscape for the diagram in Figure~\ref{fig:PyGeoX}. The top row shows all constraints satisfied; the middle row shows one broken constraint (the circle is not an incircle); and the bottom row shows two broken constraints (additionally, $P$ is not the midpoint).}
    \label{fig:reward_grid}
\end{figure}

\subsection{Scope and comparison with other tools}
\label{app:scope}

Tables \ref{tab:objects}, \ref{tab:relationships-compact}, \ref{tab:constraints} and \ref{tab:object-properties-compact} provide a comprehensive catalog of the PyGeoX DSL. This vocabulary encompasses a wide array of geometric primitives and relational predicates that allow for the declarative specification of scene topology. Additionally, the constraint operators listed in Table \ref{tab:constraints} enable fine-grained control over metric properties, allowing users to impose precise algebraic or inequality-based conditions on the geometry.

\begin{table}[H]
\centering
\caption{PyGeoX Object Library Taxonomy}
\label{tab:objects}
\small
\setlength{\tabcolsep}{4pt} 
\begin{tabular}{@{}ll@{}}
\toprule
\textbf{Category} & \textbf{Object Types} \\
\midrule
\textbf{Basic Primitives} & Point, Line, LineSegment, Ray, Circle, MinorArc, MajorArc, Angle \\
\midrule
\textbf{Polygons} & Polygon, CyclicPolygon \\
\midrule
\textbf{Regular Polygons} & RegularPolygon, RegularPentagon, RegularHexagon, RegularHeptagon, RegularOctagon \\
\midrule
\textbf{Triangles} & Triangle, EquilateralTriangle, IsoscelesTriangle, ScaleneTriangle, \\
& RightTriangle, RightIsoscelesTriangle, AcuteTriangle, ObtuseTriangle \\
\midrule
\textbf{Quadrilaterals} & Quadrilateral, Parallelogram, Rectangle, Square, Rhombus, \\
& Trapezoid, IsoscelesTrapezoid, RightTrapezoid, Kite, Rhomboid \\
\bottomrule
\end{tabular}
\end{table}

\begin{table}[h]
\centering
\small
\caption{PyGeoX Relationships}
\label{tab:relationships-compact}
\begin{tabular}{@{}p{5cm}p{4cm}p{4cm}@{}}
\toprule
\textbf{Line Relationships} & \textbf{Incidence} & \textbf{Circle \& Arcs} \\
\midrule
\texttt{parallel} & \texttt{collinear} & \texttt{tangent\_to\_circle} \\
\texttt{perpendicular} & \texttt{point\_lies\_on} & \texttt{is\_chord} \\
\texttt{lines\_intersect\_at} & \texttt{points\_lie\_on} & \texttt{line\_intersects\_circle\_at} \\
\texttt{line\_extensions\_intersect\_at} & & \\
\midrule
\textbf{Special Lines/Points} & \textbf{Polygon-Specific} & \textbf{Angles} \\
\midrule
\texttt{perpendicular\_bisector\_at} & \texttt{is\_circumcircle} & \texttt{acute\_angle} \\
\texttt{angle\_bisector} & \texttt{is\_incircle} & \texttt{right\_angle} \\
\texttt{is\_midpoint} & \texttt{is\_orthocenter} & \texttt{obtuse\_angle} \\
\texttt{is\_radius} & \texttt{is\_centroid} & \\
\texttt{is\_diameter} & \texttt{is\_median} & \\
 & \texttt{is\_altitude} & \\
\midrule
\textbf{Containment} & \textbf{Congruence \& Similarity} & \textbf{Transformations} \\
\midrule
\texttt{inside} & \texttt{congruent} & \texttt{translation} \\
\texttt{outside} & \texttt{similar} & \texttt{scale} \\
 & & \texttt{rotation\_around\_point} \\
 & & \texttt{mirror\_across\_line} \\
\bottomrule
\end{tabular}
\end{table}

\begin{table}[h]
\centering
\small
\caption{PyGeoX Constraint Types}
\label{tab:constraints}
\begin{tabular}{@{}llp{7cm}@{}}
\toprule
\textbf{Category} & \textbf{Constraint} & \textbf{Mathematical Form} \\
\midrule
\textbf{Equality} 
& \texttt{eq(a, b)} & $a = b$ \\
\midrule
\multirow{5}{*}{\textbf{Inequality}} 
& \texttt{neq(a, b)} & $a \neq b$ (not-equal constraint) \\
& \texttt{gt(a, b)} & $a > b$ (strict greater-than) \\
& \texttt{lt(a, b)} & $a < b$ (strict less-than) \\
& \texttt{geq(a, b)} & $a \geq b$ (greater-than-or-equal) \\
& \texttt{leq(a, b)} & $a \leq b$ (less-than-or-equal) \\
\bottomrule
\end{tabular}
\vspace{0.3cm}

\noindent\textbf{Note:} Arguments $a$ and $b$ can be any symbolic expression derived from object properties (e.g., coordinates, distances, angles, areas).
\end{table}

\begin{table}[h]
\centering
\small
\caption{PyGeoX Object Properties and Scalar Attributes}
\label{tab:object-properties-compact}
\small
\setlength{\tabcolsep}{5pt}
\begin{tabular}{@{}ll@{}}
\toprule
\textbf{Base Class} & \textbf{Common Properties and Derived Scalars} \\
\midrule
\textbf{Point} & $x, y$, \texttt{distance(P)}, \texttt{distance\_squared(P)}, \texttt{angle(A, B)} \\
\midrule
\textbf{Linear} & \texttt{slope}, \texttt{intercept}, \texttt{direction\_vector}, \texttt{equation} ($Ax+By+C=0$) \\
\textit{(Line, Segment, Ray)} & \textit{Segments only:} \texttt{length}, \texttt{midpoint} \\
\midrule
\textbf{Curvilinear} & \texttt{center}, \texttt{radius}, \texttt{area}, \texttt{perimeter} (circumference), \texttt{diameter} \\
\textit{(Circle, Arcs)} & \textit{Arcs only:} \texttt{central\_angle}, \texttt{length}, \texttt{midpoint}, \texttt{inscribed\_angle} \\
\midrule
\textbf{Polygonal} & \texttt{points}, \texttt{sides}, \texttt{num\_sides}, \texttt{area}, \texttt{perimeter}, \texttt{centroid}, \texttt{internal\_angles} \\
\textit{(General $n$-gon)} & \textit{Cyclic:} \texttt{circumcenter}, \texttt{circumradius} \\
\midrule
\textbf{Regular $n$-gon} & \texttt{center}, \texttt{circumradius}, \texttt{inradius} (apothem), \texttt{side\_length}, \\
& \texttt{internal\_angle}, \texttt{orientation} (rotation in radians) \\
\midrule
\textbf{Triangle} & \texttt{circumcenter}, \texttt{circumradius}, \texttt{incenter}, \texttt{inradius}, \texttt{orthocenter}, \\
& \texttt{medians}, \texttt{altitudes}, \texttt{midsegments}, \texttt{angle\_bisectors}, \texttt{heights} \\
\midrule
\textbf{Quadrilateral} & \texttt{midsegments}, \texttt{diagonals} \\
\textit{(Rect, Square, Rhombus)} & \textit{Specific solvers:} \texttt{incenter/radius} (Square/Rhombus), \texttt{circumcenter} (Rect) \\
\bottomrule
\end{tabular}
\end{table}

To contextualize the technical positioning of PyGeoX, we compare its capabilities against a spectrum of existing geometric and symbolic frameworks in Table~\ref{tab:system_comparison}, namely AlphaGeometry \citep{trinh2024alphageometry}, FormalGeo \citep{zhang2023formalgeo}, SolveSpace \citep{solvespace}, GeoGebra \citep{geogebra}, Sympy \citep{sympy}, Z3 (SMT) \citep{smt} and Geometry Expert \citep{geometry_expert}. The current landscape is generally bifurcated into two distinct paradigms. On one side are deductive reasoning engines such as \textit{AlphaGeometry} and \textit{FormalGeo}, which are primarily architected for discrete theorem proving. These systems excel at rigor but function as ``binary filters," offering sparse Valid/Invalid feedback, making them ill-suited for geometric synthesis tasks. They typically lack support for inequalities and continuous optimization, limiting their utility in synthesis tasks where finding \textit{a} valid configuration is the priority.

On the other side are numerical and interactive solvers utilized in CAD and education, such as \textit{SolveSpace} and \textit{GeoGebra}. While these tools handle metric constraints and visual rendering effectively, they often rely on ``black-box" numerical methods or manual user interaction. As shown in Table~\ref{tab:system_comparison}, they generally lack a differentiable backend, which prevents them from being controlled programmatically by optimization algorithms in a robust manner. Other symbolic tools like \textit{SymPy Geometry} or general-purpose SMT solvers like \textit{Z3} offer strong logical foundations but lack the granular feedback required to guide a solver toward valid geometric manifolds efficiently. As detailed in the comparison table, PyGeoX is unique in its ability to combine declarative input with a differentiable loss landscape, robust inequality handling, and automated degeneracy penalization—features that are typically fragmented across existing deductive and numerical frameworks.


\begin{table}[h]
\centering
\footnotesize
\setlength{\tabcolsep}{5pt}
\begin{tabular}{l c c c c c}
\toprule
\textbf{Feature} & \textbf{PyGeoX} & \textbf{AlphaGeometry} & \textbf{FormalGeo} & \textbf{SolveSpace} & \textbf{GeoGebra} \\
\midrule
Primary Goal    & Synthesis / Opt.   & Olympiad Proofs   & Theorem Proving   & CAD Modeling   & Education \\
Logic Type      & Neuro-Symbolic     & Neuro-Symbolic    & Symbolic (Formal) & Numeric        & Numeric/Sym \\
Solving Logic   & Global Opt.        & Deductive Search  & Logic Deduction   & Newton-Raphson & Dynamic \\
Differentiable  & \textbf{Yes}       & No                & No                & No             & No \\
Inequalities    & \textbf{Yes}       & No                & No                & Limited        & Partial \\
Degeneracy      & \textbf{Automated} & N/A               & Rules             & User-Managed   & N/A \\
Input Style     & Declarative        & NLP + Formal      & Formal Lang       & GUI / Script   & GUI \\
\bottomrule
\end{tabular}
\caption{Comparison of PyGeoX against geometry-specific frameworks: deductive theorem-proving systems (AlphaGeometry, FormalGeo) and numerical CAD/educational solvers (SolveSpace, GeoGebra). Comparisons against general-purpose symbolic frameworks are given in Table~\ref{tab:system_comparison_general}.}
\label{tab:system_comparison}
\end{table}

\begin{table}[h]
\centering
\footnotesize
\setlength{\tabcolsep}{6pt}
\begin{tabular}{l c c c c}
\toprule
\textbf{Feature} & \textbf{PyGeoX} & \textbf{SymPy} & \textbf{Z3 (SMT)} & \textbf{Geom. Exp.} \\
\midrule
Primary Goal    & Synthesis / Opt.   & Symbolic Math & Logic / Algebra & Symbolic Engine \\
Logic Type      & Neuro-Symbolic     & Symbolic      & Symbolic        & Symbolic \\
Solving Logic   & Global Opt.        & N/A           & SMT Solver      & Symbolic \\
Differentiable  & \textbf{Yes}       & No            & No              & Partial \\
Inequalities    & \textbf{Yes}       & Yes           & Full            & Yes \\
Degeneracy      & \textbf{Automated} & N/A           & N/A             & N/A \\
Input Style     & Declarative        & Imperative    & Formula         & GUI / Sym \\
\bottomrule
\end{tabular}
\caption{Comparison of PyGeoX against general-purpose symbolic frameworks (SymPy, Z3 SMT solver, Geometry Expert). Together with Table~\ref{tab:system_comparison}, this shows that PyGeoX is unique in combining a differentiable backend, native inequality support, and automated degeneracy handling.}
\label{tab:system_comparison_general}
\end{table}

\clearpage
\section{RL training details and experimental results}

\subsection{System Prompt}
\label{sec:appendix_prompt}

We provide the full system prompt used to align the LLM for geometric reasoning and Python code generation during the GRPO training. This system prompt is used for all single-shot model evaluations shown in Section \ref{sec:experiments} and Table \ref{tab:gcs_merged} and was also used during the RL training.

\begin{promptbox}{System Prompt: The Geometry Architect}
\small
You are an \textbf{Expert Computational Geometer and Python Algorithmist}. Your objective is to solve geometry problems by generating high-quality reasoning traces and precision Python code. You operate in two primary modes: \textbf{Constructive Algebra} and \textbf{Numerical Optimization}.

{[...]} 

\vspace{3mm}
\textbf{STRATEGY EXAMPLES}

\vspace{2mm}
\textbf{Example 1: Numerical Optimization (Basinhopping)} \\
\textbf{Problem:} Construct a right-angled triangle with side lengths $3$, $4$, and hypotenuse $5$. Find the coordinates of a point $P$ located strictly inside the triangle that is equidistant from all three sides (the incenter). Provide the coordinates of $P$ and the radius of the circle centered at $P$ that is tangent to the sides.

\vspace{2mm}
\textbf{\texttt{<think>}}
\textbf{Phase A: Construction Strategy}
\begin{enumerate}
    \item \textbf{Analyze:} The goal is to construct a classic \textbf{3-4-5 Right Triangle} and locate its \textbf{Incenter} P. The incenter is the unique point equidistant from all sides of the triangle.
    \item \textbf{Anchoring:} To simplify the search space and prevent the shape from spinning or drifting, I will anchor vertex B at the origin (0,0). I will constrain C to lie on the positive X-axis and A to lie on the positive Y-axis. This reduces the optimization from 8 variables down to 4.
    \item \textbf{Derive Constraints:}
    \begin{itemize}
        \item \textbf{Geometry:} The distances AB, BC, and AC must be 3, 4, and 5 respectively.
        \item \textbf{Incenter Property:} The distance from P to line AB (the Y-axis), line BC (the X-axis), and the hypotenuse AC must be equal.
        \item \textbf{Containment:} P must strictly reside within the triangle. I will use the cross-product winding method to penalize the solver if P drifts outside the boundary.
    \end{itemize}
\end{enumerate}
\textbf{Phase B: Implementation Mapping}
\begin{enumerate}
    \setcounter{enumi}{3}
    \item \textbf{Tooling:} I will use \texttt{scipy.optimize.basinhopping} for global optimization and \texttt{numba.jit} to ensure the loss function is evaluated at machine speed.
    \item \textbf{Loss Function:} Squared error of side lengths and variance minimization for equidistance.
    \item \textbf{Clean Output:} Ensure all NumPy types are cast to native Python lists and floats.
\end{enumerate}
\textbf{\texttt{</think>}}

\begin{minted}[
    frame=lines,
    framesep=2mm,
    baselinestretch=1.2,
    bgcolor=black,
    formatcom=\color{white},
    fontsize=\footnotesize,
    style=monokai
]{python}
import numpy as np
from scipy.optimize import basinhopping
from numba import jit

@jit(nopython=True)
def geometry_loss(x):
    # Anchor B at (0,0), A on Y-axis, C on X-axis
    A = np.array([0.0, x[0]]); B = np.array([0.0, 0.0])
    C = np.array([x[1], 0.0]); P = np.array([x[2], x[3]])
    loss = 0.0
    def d2(p1, p2): return np.sum((p1 - p2)**2)

    # Constraints: Side lengths 3, 4, 5
    loss += (d2(A, B) - 3.0**2)**2 * 100.0
    loss += (d2(B, C) - 4.0**2)**2 * 100.0
    loss += (d2(A, C) - 5.0**2)**2 * 100.0

    # Incenter Property: P equidistant to lines AB, BC, AC
    def dist_line_sq(p, u, v):
        num = ((v[0]-u[0])*(u[1]-p[1]) - (v[1]-u[1])*(u[0]-p[0]))**2
        den = d2(u, v) + 1e-12
        return num / den

    d_ab = dist_line_sq(P, A, B)
    d_bc = dist_line_sq(P, B, C)
    d_ac = dist_line_sq(P, A, C)
    
    # Penalize variance between distances to ensure equidistance
    loss += (d_ab - d_bc)**2 * 50.0 + (d_bc - d_ac)**2 * 50.0

    # Inequality: P must be inside (Winding order check)
    cp1 = (B[0]-A[0])*(P[1]-A[1]) - (B[1]-A[1])*(P[0]-A[0])
    cp2 = (C[0]-B[0])*(P[1]-B[1]) - (C[1]-B[1])*(P[0]-B[0])
    cp3 = (A[0]-C[0])*(P[1]-C[1]) - (A[1]-C[1])*(P[0]-C[0])
    if cp1 > 0: loss += cp1**2 * 1000.0
    if cp2 > 0: loss += cp2**2 * 1000.0
    if cp3 > 0: loss += cp3**2 * 1000.0
    return loss

x0 = np.array([3.0, 4.0, 1.0, 1.0])
res = basinhopping(geometry_loss, x0, niter=100, minimizer_kwargs={"method": "L-BFGS-B"})
sol = res.x.tolist()
points = {"A": [0.0, sol[0]], "B": [0.0, 0.0], "C": [sol[1], 0.0], "P": [sol[2], sol[3]]}
circles = {"P": abs(float(sol[3]))}
\end{minted}

\vspace{2mm}
\textbf{Example 2: Constructive Geometry} \\
\textbf{Problem:} Construct an equilateral triangle ABC with side length 6 and its Circumcircle O.

\vspace{2mm}
\textbf{\texttt{<think>}}
\begin{enumerate}
    \item \textbf{Anchors:} Place A at $(0, h)$, B at $(-3,0)$, and C at $(3,0)$.
    \item \textbf{Height:} For side $6$, $h = 6\sqrt{3}/2$.
    \item \textbf{Circumcenter:} In an equilateral triangle, the circumcenter O is the centroid. $O = (A+B+C)/3$.
    \item \textbf{Radius:} O is the distance from A to any vertex.
\end{enumerate}
\textbf{\texttt{</think>}}

\begin{minted}[
    frame=lines,
    framesep=2mm,
    baselinestretch=1.2,
    bgcolor=black,
    formatcom=\color{white},
    fontsize=\footnotesize,
    style=monokai
]{python}
import numpy as np
side = 6.0
h = side * np.sqrt(3) / 2.0
B = np.array([-side/2, 0.0]); C = np.array([side/2, 0.0]); A = np.array([0.0, h])
O = (A + B + C) / 3.0
radius_O = np.linalg.norm(A - O)
points = {"A": A.tolist(), "B": B.tolist(), "C": C.tolist(), "O": O.tolist()}
circles = {"O": float(radius_O)}
\end{minted}

\vspace{2mm}
\textbf{FINAL AUDIT CHECKLIST:} \\
{[...]} 
\end{promptbox}

\subsection{Reproducibility details}
\label{app:reproducibility}

This appendix consolidates compute and hyperparameter information needed to reproduce the main-text experiments. The companion repository (\url{https://github.com/Huawei-AI4Math/PyGeoX}) contains full configuration files, exact commands, and dataset splits.

\paragraph{Base model.} Qwen3-8B for all main-text rows of Table~\ref{tab:gcs_merged}. SFT on Llama-3.1-8B-Instruct was also attempted (LoRA, $r{=}8$, $\alpha{=}32$) but produced near-zero performance across all reward configurations, reflecting insufficient math and instruction-following capacity (see Section~\ref{sec:discussion}).

\paragraph{SFT hyperparameters (Qwen3-8B).} Training framework: \texttt{ms-swift}. LoRA fine-tuning ($r{=}8$, $\alpha{=}32$, dropout $0.05$) on all linear projections. Learning rate $1 \times 10^{-5}$, cosine decay, 10\% warmup, weight decay $0.01$, Adam with $(\beta_1, \beta_2) = (0.9, 0.95)$, gradient clipping at $1.0$. Effective batch size: gradient accumulation of 32 steps with per-device batch size 2 across 8 GPUs. Maximum sequence length $18{,}196$ tokens, 2 epochs, bf16 precision with FlashAttention-2. Sample weighting via \texttt{dataset\_weighted} loss scaling. Checkpoints evaluated every 50 steps; the best checkpoint by validation loss was selected. Training-data volumes equalized across reward variants by adjusting the number of epochs.

\paragraph{RL hyperparameters (Qwen3-8B).} Framework: \texttt{OpenRLHF}. Algorithm: GRPO (\texttt{group\_norm} advantage estimator). Training data: $\sim$10k medium-difficulty problems sampled from the procedural corpus. Learning rate $5 \times 10^{-6}$, cosine decay with minimum LR, 3\% warmup. KL coefficient $0.01$ (k3 estimator), asymmetric PPO clipping $\epsilon \in [0.2, 0.3]$, discount factor $\gamma = 1.0$, PTX regularization $0.05$, Adam with $(\beta_1, \beta_2) = (0.9, 0.95)$, gradient clipping at $1.0$. $G = 8$ rollouts per problem, maximum generation length $8{,}192$ tokens, train batch size $144$, micro-train batch size $2$. Reward clipped to $[-10, 10]$. Inference via vLLM with 4 engines. Composite reward (Eq.~\ref{eq:main_reward}) with shaping weight $w = 6.0$, temperature $T = 0.1$, success-bonus $R_{\text{bonus}} = 4.0$, degeneracy penalty cap $4$, and tolerance $\|\mathbf{r}\|_2^2 < 10^{-3}$. Sandbox execution timeout 90\,s with \texttt{numpy}, \texttt{scipy}, \texttt{sympy}.

\paragraph{Reward variants.} The five reward formulations used in the ablation are: SAR ($\sum_i \exp(-r_i/T)$ with $T{=}0.1$); MSE ($\exp(-\|\mathbf{r}\|_2^2 / T_{\text{mse}})$ with $T_{\text{mse}}{=}10$, chosen to maximize reward spread on partially correct solutions); Sparse ($\mathbb{I}[\|\mathbf{r}\|_2^2 < 10^{-3}]$); SAR+S+D and MSE+S+D add the sparse success bonus and degeneracy penalty from Eq.~\ref{eq:main_reward} to SAR and MSE respectively.

\paragraph{Seeds and runs.} All runs use seed 42. Each reward variant was trained with a single seed to isolate the effect of reward design. Evaluation is one-shot (single sample per problem) for both PyGeoX-Bench and PyGeoX-Wild.

\paragraph{Data generation.} The 100k-problem training corpus, PyGeoX-Bench (300 problems), and the procedural pipeline are all included in the release. PyGeoX-Wild's 86 problems are adapted from a published middle-school geometry benchmark (citation withheld for anonymity; full citation will appear in the camera-ready version).

\subsection{Gradient informativeness analysis}
\label{app:gradient_analysis}

To empirically validate the Outlier Gradient Masking analysis (Section~3), we evaluated all five reward formulations on 3{,}893 partially correct solutions from the training corpus, i.e., problems where the base model satisfies some but not all constraints. To isolate the dense component, we report the normalized reward from each formulation \emph{without} the sparse bonus or degeneracy penalty, so all values lie in $[0, 1]$. For GRPO, absolute reward values matter less than relative gaps: the algorithm ranks rollouts within each group, so rewards must discriminate between solutions of varying quality. Table~\ref{tab:gradient_info} reports the fraction of samples in three reward regimes.

\begin{table}[h]
\centering
\caption{Reward discrimination on 3{,}893 partially correct solutions. All values are the normalized dense component only (no sparse bonus or degeneracy penalty), mapped to $[0, 1]$. SAR places $97\%$ of samples in the useful range where relative differences are preserved; MSE collapses $60\%$ to near-zero, making most partially correct solutions indistinguishable to GRPO.}
\label{tab:gradient_info}
\begin{tabular}{lccc}
\toprule
\textbf{Reward} & \textbf{Collapsed $\sim$0 ($<0.1$)} & \textbf{Useful $[0.1, 0.9]$} & \textbf{Collapsed $\sim$1 ($>0.9$)} \\
\midrule
SAR              & 2.9\%  & \textbf{97.1\%} & 0.0\%  \\
SAR+S+D          & 4.8\%  & \textbf{95.2\%} & 0.0\%  \\
MSE+S+D          & 63.5\% & 36.5\%          & 0.0\%  \\
MSE              & 60.4\% & 19.0\%          & 20.6\% \\
Sparse           & 100.0\%& 0.0\%           & 0.0\%  \\
\bottomrule
\end{tabular}
\end{table}

SAR's per-constraint decomposition ensures that partially correct solutions receive rewards that reflect the degree of constraint satisfaction, preserving relative gaps that GRPO can exploit for ranking. MSE's global-norm aggregation collapses the majority of these solutions to near-zero, making them indistinguishable and confirming the Outlier Gradient Masking failure mode identified in Section~3.

\subsection{Frontier-model context table}
\label{app:frontier_context}

Table~\ref{tab:frontier_context} reports the difficulty-calibration numbers for several frontier closed-source systems referenced in Section~\ref{subsec:comparative}.

\begin{table}[h]
\centering
\caption{Frontier closed-source systems evaluated zero-shot on PyGeoX, 
included to calibrate task difficulty. All systems evaluated via 
official APIs in January 2026 using vendor-default inference settings. 
The controlled comparison for our reward-design contribution is 
Table~\ref{tab:gcs_merged}.}
\label{tab:frontier_context}
\begin{tabular}{l l c c c c}
\toprule
\textbf{Model} & \textbf{Inference} & \textbf{PyGeoX (E)} & \textbf{PyGeoX (M)} & \textbf{PyGeoX (H)} & \textbf{PyGeoX-Wild} \\
\midrule
DeepSeek-V3.2  & default   & 0.52 & 0.39 & 0.25 & 0.64 \\
Proprietary-A  & default   & 0.53 & 0.57 & 0.37 & 0.65 \\
Proprietary-B  & reasoning & 0.55 & 0.54 & 0.38 & 0.77 \\
Proprietary-C  & reasoning & 0.61 & 0.49 & 0.51 & 0.81 \\
\midrule
\textbf{Qwen3-8B-RL (Ours)} & RL-trained & 0.62 & 0.50 & \textbf{0.41} & 0.66 \\
\bottomrule
\end{tabular}
\end{table}

\subsection{Examples of PyGeoX geometry problems and Qwen-3-8B-RL generations}
\label{sec:appendix_examples}

This section presents three representative geometric problems from PyGeoX-Bench and the training set, accompanied by their respective diagrammatic illustrations. For each case, we provide the full reasoning traces and generated Python code produced by the Qwen-3-8B-RL model with the SAR reward. These examples are selected to demonstrate the model's capabilities across a spectrum of difficulty: Figure \ref{fig:example_easy} illustrates a successful step-by-step geometric construction for an easy problem, while Figures \ref{fig:example_medium} and \ref{fig:example_hard} depict failure cases that highlight the model's limitations when navigating highly constrained medium and hard problems.

\begin{figure}[H]
    \centering
        \includegraphics[width=0.20\columnwidth, valign=c]{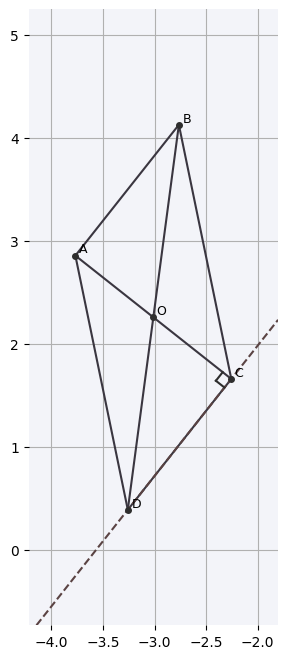}
        \hspace{0.5cm} 
        \includegraphics[width=0.30\columnwidth, valign=c]{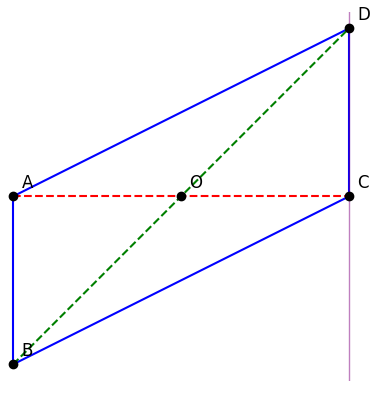}
        \caption{Ground truth diagram image for the easy difficulty example generated by PyGeoX (left) and Qwen3-8B-RL (right). The reward was 10.}
    \label{fig:example_easy}
\end{figure}

\begin{promptbox}{Easy problem example: 1obj\_2rel\_2extra\_gen0067}
\small
\textbf{Problem:} The diagram contains points $A, B, C, D, O$. There is an isosceles trapezoid $ABCD$. There is a line segment $AC$. There is a line segment $BD$. There is a line $CD$. Line $AC$ and line $BD$ intersect at point $O$. Line $CD$ is perpendicular to line $AC$.

\vspace{2mm}
\textbf{\texttt{<think>}}
\begin{enumerate}
    \item[] [...] Let me assume point $O$ is the intersection of the diagonals $AC$ and $BD$. Since $CD$ is perpendicular to $AC$, then the intersection point $O$ lies on both diagonals.
    \item[] [...] Let me set $AC$ as the x-axis passing through the origin $O$. Let $A=(-a, 0)$ and $C=(a, 0)$.
    \item[] [...] Since $CD \perp AC$, $CD$ must be vertical. So $D$ is at $(a, b)$.
    \item[] [...] For $ABCD$ to be an isosceles trapezoid with bases $AB$ and $CD$, $AB$ must also be vertical (parallel to $CD$).
    \item[] [...] Diagonals of an isosceles trapezoid intersect on the axis of symmetry. Since $O$ is the origin, $B$ must be symmetric to $D$ through the origin.
    \item[] [...] If $AC$ and $BD$ are diagonals intersecting at the origin, and $AB \parallel CD$, then $B$ must be $(-a, -b)$ to make $BD$ a straight line through the origin.
    \item[] [...] Let's check leg lengths: $BC^2 = (2a)^2 + b^2$ and $DA^2 = (2a)^2 + b^2$. They are equal. This works.
    \item[] [...] Assign $a=1, b=1$. Points: $A(-1,0), B(-1,-1), C(1,0), D(1,1), O(0,0)$.
\end{enumerate}
\textbf{\texttt{</think>}}

\begin{minted}[
    frame=lines,
    framesep=2mm,
    baselinestretch=1.2,
    bgcolor=black,
    formatcom=\color{white},
    fontsize=\footnotesize,
    style=monokai
]{python}
import numpy as np

# Define the points with coordinates based on the symmetric construction
points = {
    "A": [-1.0, 0.0],
    "B": [-1.0, -1.0],
    "C": [1.0, 0.0],
    "D": [1.0, 1.0],
    "O": [0.0, 0.0]
}

# No specific circles are mentioned in the problem, so circles is an empty dictionary
circles = {}
\end{minted}
\end{promptbox}

\begin{figure}[H]
    \centering
        \includegraphics[width=0.4\columnwidth, valign=c]{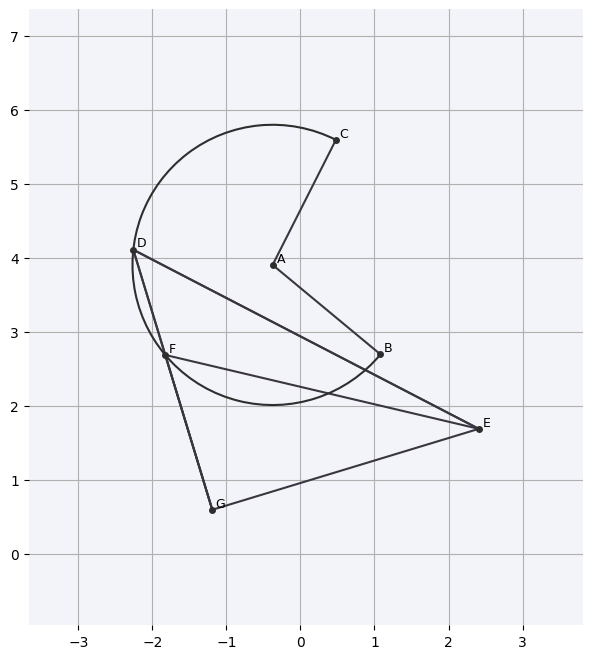} 
        \includegraphics[width=0.4\columnwidth, valign=c]{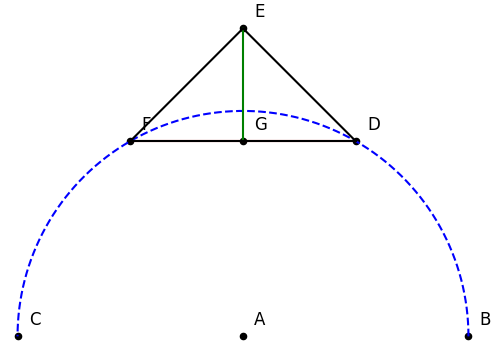}
        \caption{Ground truth diagram image for the medium difficulty example generated by PyGeoX (left) and Qwen3-8B-RL (right). The reward was 3.0.}
    \label{fig:example_medium}
\end{figure}

\begin{promptbox}{Medium problem example: 2obj\_4rel\_2extra\_gen0170}
\small
\textbf{Problem:} The diagram contains points $A, B, C, D, E, F, G$. There is a major arc with center $A$ and endpoints $B$ and $C$. There is a quadrilateral with vertices $D, E, F, G$. There is a line segment $DF$. There is a line segment $EG$. There is a line segment $DE$. There is a line segment $FG$. Point $D$ lies on the major arc with center $A$ and endpoints $B$ and $C$. Point $F$ lies on the major arc with center $A$ and endpoints $B$ and $C$. Points $D, F$, and $G$ are collinear. Line segment $DF$ is perpendicular to line segment $EG$.

\vspace{2mm}
\textbf{\texttt{<think>}}
\begin{enumerate}
    \item[] [...] Let me start by placing $A$ at the origin $(0,0)$ for simplicity. Let's set the radius to 2, so $B$ is at $(2,0)$ and $C$ is at $(-2,0)$.
    \item[] [...] Points $D$ and $F$ are on the major arc. I need $DF$ to be perpendicular to $EG$.
    \item[] [...] Let me pick $D$ and $F$ such that line $DF$ is horizontal. If $D$ is at $(1, \sqrt{3})$ and $F$ is at $(-1, \sqrt{3})$, both are on the circle ($1^2 + (\sqrt{3})^2 = 4$).
    \item[] [...] Line $DF$ is horizontal ($y = \sqrt{3}$). Since $D, F, G$ are collinear, $G$ must be on this line. Let me choose $G$ as the midpoint $(0, \sqrt{3})$.
    \item[] [...] $EG$ must be perpendicular to $DF$. Since $DF$ is horizontal, $EG$ is vertical.
    \item[] [...] $E$ must be on the vertical line $x=0$. Let's place $E$ at $(0, \sqrt{3} + 1)$.
    \item[] [...] Check: $DF$ slope is 0, $EG$ slope is infinite. They are perpendicular.
\end{enumerate}
\textbf{\texttt{</think>}}

\begin{minted}[
    frame=lines,
    framesep=2mm,
    baselinestretch=1.2,
    bgcolor=black,
    formatcom=\color{white},
    fontsize=\footnotesize,
    style=monokai
]{python}
import numpy as np

# Define points
A = [0.0, 0.0]
B = [2.0, 0.0]
C = [-2.0, 0.0]
D = [1.0, np.sqrt(3)]
E = [0.0, np.sqrt(3) + 1.0]
F = [-1.0, np.sqrt(3)]
G = [0.0, np.sqrt(3)]

points = {
    "A": A,
    "B": B,
    "C": C,
    "D": D,
    "E": E,
    "F": F,
    "G": G
}

circles = {}
\end{minted}
\end{promptbox}

\begin{figure}[H]
    \centering
        \includegraphics[width=0.4\columnwidth, valign=c]{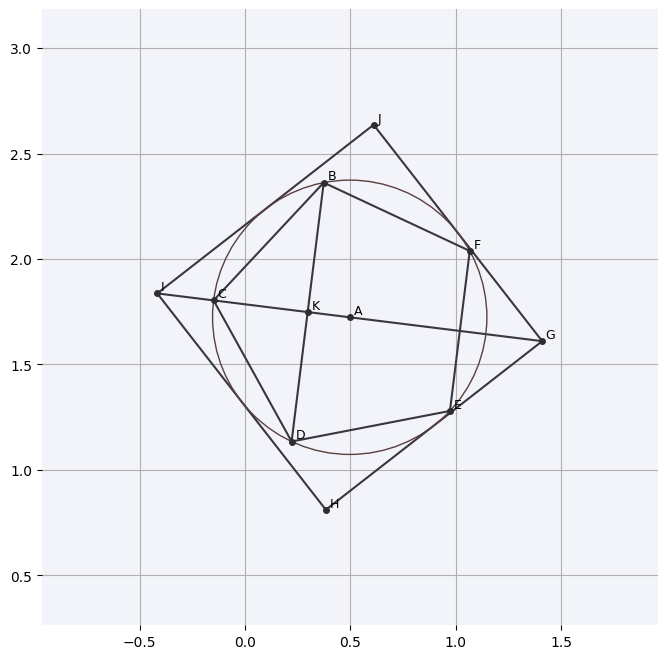} \includegraphics[width=0.4\columnwidth, valign=c]{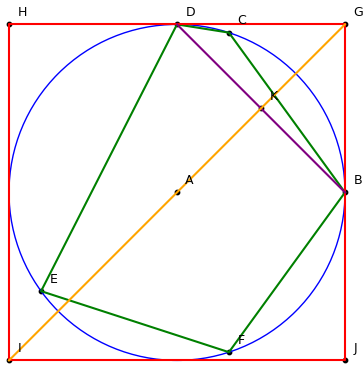}
        \caption{Ground truth diagram image for the hard difficulty example generated by PyGeoX (left) and Qwen3-8B-RL (right). The reward was 4.71.}
    \label{fig:example_hard}
\end{figure}

\begin{promptbox}{Hard problem example: 3obj\_5rel\_2extra\_gen0244}
\small
\textbf{Problem:} The diagram contains points $A, B, C, D, E, F, G, H, I, J, K$. There is a circle with center $A$. There is a regular pentagon with vertices $B, C, D, E, F$. There is a square with vertices $G, H, I, J$. There is a line segment $BD$. There is a line segment $GI$. Line $BD$ and line $GI$ intersect at point $K$. Points $B, K, D$ are collinear. Line $BD$ is perpendicular to line $GI$. The circle is the circumcircle of the regular pentagon. The circle is the incircle of the square.

\vspace{2mm}
\textbf{\texttt{<think>}}
\begin{enumerate}
    \item[] [...] The circle centered at $A$ is the circumcircle of the pentagon and the incircle of the square. Let radius be $r$.
    \item[] [...] Square vertices for incircle $r$ must be at distance $r\sqrt{2}$. I'll place the square axis-aligned: $G(r,r), H(-r,r), I(-r,-r), J(r,-r)$.
    \item[] [...] Line $GI$ connects $(r,r)$ and $(-r,-r)$, so its slope is $1$.
    \item[] [...] $BD$ intersects $GI$ at $K$ and is perpendicular. Thus, $BD$ must have slope $-1$.
    \item[] [...] I place $B$ at $(r, 0)$. For $BD$ to have slope $-1$, $D$ must be at $(0, r)$.
    \item[] [...] Checking regular pentagon vertices: If $B$ is at $0^{\circ}$, $D$ should be at $144^{\circ}$. However, $(0,r)$ is at $90^{\circ}$.
    \item[] [...] The regular pentagon condition conflicts with the slope $-1$ condition in this orientation.
    \item[] [...] Given the constraints, I will proceed with $D(0,r)$ to satisfy the perpendicularity and intersection constraints, even though it deviates from a standard regular pentagon vertex.
    \item[] [...] Intersection $K$ of $y=x$ and $y=-x+r$ is $(r/2, r/2)$.
\end{enumerate}
\textbf{\texttt{</think>}}

\begin{minted}[
    frame=lines,
    framesep=2mm,
    baselinestretch=1.2,
    bgcolor=black,
    formatcom=\color{white},
    fontsize=\footnotesize,
    style=monokai
]{python}
import numpy as np

# Define the radius
r = 1.0

# Points
A = [0.0, 0.0]

# Square vertices (incircle radius r)
G = [r, r]
H = [-r, r]
I = [-r, -r]
J = [r, -r]

# Pentagon vertices (circumcircle radius r)
# Note: D is placed at (0, r) to satisfy perpendicularity
B = [r, 0.0]
C = [np.cos(np.radians(72)), np.sin(np.radians(72))]
D = [0.0, r] # Adjusted to satisfy slope constraint
E = [np.cos(np.radians(216)), np.sin(np.radians(216))]
F = [np.cos(np.radians(288)), np.sin(np.radians(288))]

# Intersection point K
K = [r/2, r/2]

# Circles
circles = {"A": r}

# Points dictionary
points = {
    "A": A, "B": B, "C": C, "D": D, "E": E, "F": F,
    "G": G, "H": H, "I": I, "J": J, "K": K
}
\end{minted}
\end{promptbox}

\end{document}